\newtheorem{theorem}{\textbf{Theorem}}
\newtheorem{lemma}{\textbf{Lemma}}
\DeclareMathAlphabet\mathbfcal{OMS}{cmsy}{b}{n}
\newcommand\BibTeX{{\rmfamily B\kern-.05em \textsc{i\kern-.025em b}\kern-.08em
T\kern-.1667em\lower.7ex\hbox{E}\kern-.125emX}}
\begin{document}

\runninghead{Siva et al.}

\title{Self-Reflective Terrain-Aware Robot Adaptation for Consistent Off-Road Ground Navigation}
\author{Sriram Siva\affilnum{1}, Maggie Wigness\affilnum{2}, John G. Rogers\affilnum{2}, Long Quang\affilnum{2}, and Hao Zhang\affilnum{1}}
\affiliation{\affilnum{1}Human-Centered Robotics Lab, Colorado School of Mines, Golden, CO 80401, USA.\\
\affilnum{2}Army Research Laboratory (ARL), Adelphi, MD 20783, USA.}
\corrauth{Hao Zhang,
Human-Centered Robotics Lab, Colorado School of Mines, Golden, CO 80401, USA}
\email{hzhang@mines.edu}

\begin{abstract}

\color{black}
Ground robots require the crucial capability of traversing unstructured and unprepared terrains and avoiding obstacles to complete tasks in real-world robotics applications such as disaster response.
When a robot operates in off-road field environments such as forests, 
the robot's actual behaviors often do not match its expected or planned behaviors,
due to changes in the characteristics of terrains and the robot itself.
Therefore, the capability of robot adaptation for consistent behavior generation is essential for maneuverability on unstructured off-road terrains.
In order to address the challenge, 
we propose a novel method of self-reflective terrain-aware adaptation for ground robots
to generate consistent controls to navigate over unstructured off-road terrains,
 which enables robots to more accurately execute the expected behaviors through
robot self-reflection while adapting to varying unstructured terrains.
To evaluate our method's performance, 
we conduct extensive experiments using real ground robots
with various functionality changes over diverse unstructured off-road terrains. 
The comprehensive experimental results have shown that our
self-reflective terrain-aware adaptation method enables ground robots
to generate consistent navigational behaviors and outperforms the compared previous and baseline techniques.  
\end{abstract}

\keywords{Ground navigation, robot learning, terrain adaptation, self-reflection}

\maketitle

\section{Introduction}

Autonomous ground robots require the crucial capability of navigating over unstructured and unprepared terrains in off-road environments
and avoiding obstacles to complete tasks in real-world applications such as  search and rescue, disaster response, and
reconnaissance  (\cite{kawatsuma2012emergency,park2017disaster,schwarz2017nimbro,kuntze2012seneka}).
When operating in field environments,
ground robots need to navigate over a wide variety of unstructured off-road terrains
with changing types, slope, friction, and other characteristics that cannot be fully modeled beforehand.
In addition, ground robots operating over a long-period of time often experience changes in their own
functionalities, including damages (e.g., failed robot joints and flat
tires), natural wear and tear (e.e., reduced tyre traction), and
varying robot configurations (e.g., varying payload).
As illustrated in Figure \ref{Motivation_IJRR}, such challenges make robot navigation in unstructured off-road environments a challenging problem.
Thus, robot adaptation to changes in terrains and robot functionalities
is essential to the success of offroad ground navigation.

\begin{figure}[h]
\centering
\includegraphics[width=0.49\textwidth]{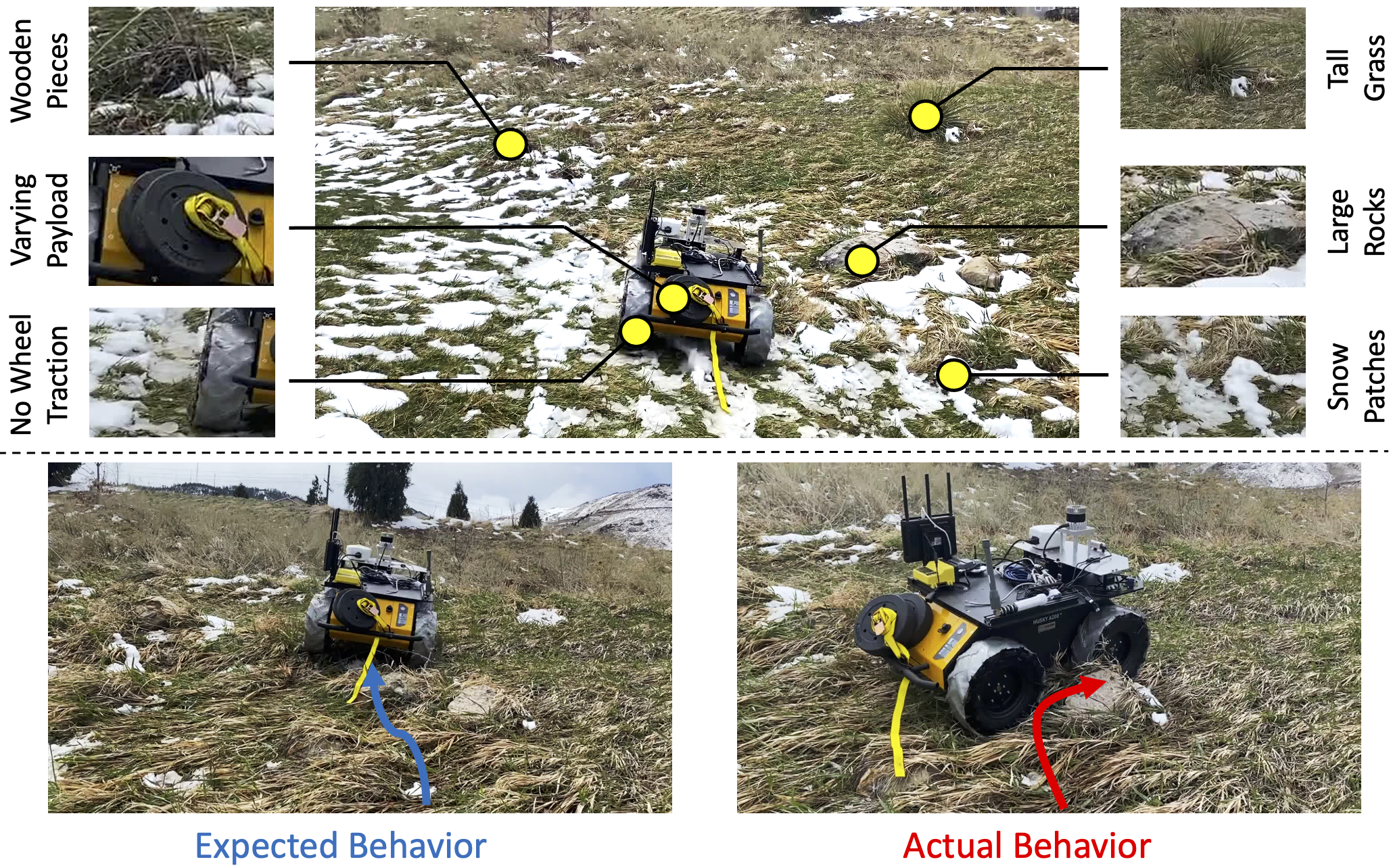}
\caption{
A motivating scenario of self-reflective terrain-aware robot adaptation for generating consistent robot navigational behaviors.
When robots operate in unstructured off-road environments, the actual behaviors often do not match their expected behaviors,
due to changes in the characteristics of terrains and the robots themselves.
Therefore, the capability of robot adaptation for consistent behavior generation is essential for maneuverability over unstructured off-road terrains.
}\label{Motivation_IJRR}
\end{figure}

Given its importance, robot adaptation to unstructured off-road terrains has been an active area of research over the past decades.
Previous methods can be broadly categorized into model-based and learning-based approaches.
Model-based methods often use control theories to model ground robots' dynamics for navigation (\cite{bhattacharyya2018linear, haruno2001mosaic}).
However, they typically ignore terrain characteristics.
Learning-based approaches can learn terrain complexity and estimate the possibility of traversing terrains through terrain recognition  (\cite{bermudez2012performance,peynot2014learned,hudjakov2009aerial}).
Learning-based methods were also developed to adapt navigational behaviors according to the environment
(\cite{posada2010floor,han2017sequence})
and
transfer expert demonstrations on navigational controls to autonomous ground robots
based upon learning from demonstration (LfD) (\cite{silver2010learning,wigness2018robot,siva2019robot}).

Although previous learning-based approaches have shown promising performance,
the expected navigational behaviors generated by previous approaches cannot always be executed accurately by a robot when it navigates over unstructured off-road terrains.
That is --
a robot's actual navigational behaviors may not be consistent with their expected behaviors.
This inconsistency is caused mainly by two reasons.
First, while ground robots navigate in real-world off-road environments,
they experience terrains with a wide variety of characteristics that cannot be modeled beforehand,
for example,
tall grass terrain with hidden rocks as illustrated in Figure \ref{Motivation_IJRR}.
Second, robots may also experience negative effects or the so-called setbacks  (\cite{borges2019strategy,knight2001balancing}), that are defined as the changes in the robot's functionalities that increase the difficulty for a robot to accomplish its expected behaviors.
Examples of the setbacks include malfunctioning robot joints, reduced wheel traction, and heavy payload.
The previous methods were not able to learn to jointly adapt to changes in both terrains and setbacks.

In order to these above shortcomings, we develop a novel method of self-reflective terrain-aware robot adaptation for consistent navigational behavior generation,
which enables robots to accurately execute the expected behaviors through robot self-reflection
while adapting to varying unstructured off-road terrains.
In psychology, self-reflection is considered as the humans' ability to modulate our behaviors by being aware of ourselves (\cite{marcovitch2008self,smith2002self,ardelt2018importance}).
We use the term self-reflection in robotics to refer to the robot's capability of adopting self-awareness
(e.g., a robot knows it has slower speeds than expected) to modulate navigational behavior generation.
In order to enable robot self-reflection,
our method monitors the difference of expected and actual behaviors that is caused by the robot's setbacks,
and accordingly adapts the robot's navigational behaviors to minimize this difference in order to generate consistent navigational behaviors.
Our approach also learns representations of unstructured terrains, which
are jointly used to classify unstructured terrains and generate navigational behaviors.
In addition, our approach is able to fuse historical observations acquired from different onboard sensors to characterize terrains,
and automatically identify the historical observations that provide the most important information for generating consistent behaviors.
The above components are all integrated into a unified mathematical framework of regularized constrained optimization with a theoretical convergence guarantee.

The contribution of this paper focuses on the introduction of the first method of self-reflective terrain-aware adaptation for ground robots to generate consistent navigational controls to traverse
unstructured off-road environments.
The specific novelties of this paper include:
\begin{itemize}
    \item We introduce a novel method for terrain-aware robot adaptation to unstructured off-road terrains,
    which is able to simultaneously recognize terrains and generate corresponding navigational behaviors,
    as well as to identify the most important terrain features for terrain adaptation.
        
    \item We introduce the novel idea and implement one of the first approaches of self-reflection for ground robots to monitor their navigational behaviors and setbacks in order to enhance consistent ground maneuverability.
                
    \item We implement a new optimization algorithm to effectively address the regularized constrained optimization problem that is formulated for self-reflective terrain-aware robot adaptation, which holds a theoretical guarantee to converge to the global optimal solution.
\end{itemize}
Furthermore, as an experimental contribution, 
we perform extensive experiments and provide a comprehensive performance evaluation of 
terrain adaptation methods by designing a set of scenarios for ground robots with various functionality changes over diverse individual and complex unstructured off-road terrain.

The remainder of this paper is structured as follows. A
review of related work is offered in Section \ref{sec:related_work}.
Our proposed methods for self-reflective terrain-aware robot adaptation is discussed in Section \ref{sec:RTA}.
We derive the optimization algorithm in \ref{sec:OPT}.
After discussing the experimental results in Section \ref{sec:EXPT},
we finally conclude the paper in Section \ref{sec:CONC}.

\section{Related Work}\label{sec:related_work}

In this section, we provide a review of the related research in the field of robot adaptation to unstructured terrains.
Existing methods for terrain adaptation can be broadly classified into two categories,
including model-based and learning-based methods.
Previous learning-based approaches can be further divided into methods
for terrain classification and learning-based adaptation.



\subsection{Control Methods for Ground Robot Navigation}


Control theory has been commonly applied to implement ground robot navigation over terrains in robotics.
Many early methods used fuzzy logic to implement robot control for ground navigation over unstructured terrains (\cite{babunski2020application,adib2017mobile,fries2018autonomous}). As these methods didn't consider robot dynamics, methods were proposed to generate navigational controls that also model a robot's dynamics   (\cite{meng1993mobile,kelly1994feedforward,dash2015controlling, verginis2021adaptive, tsunoda2020sheepdog}).
Under control theory, system identification has been commonly considered to model unknown dynamic systems by observing the behavior of the system to various robot  inputs(\cite{johansson1993system,rabiner1978fir}). For the specific case of non-linear robotic systems koopman operator theory (\cite{koopman1931hamiltonian}) provided a way of using infinite dimensional linear observable space to model robot behaviors (\cite{proctor2018generalizing,williams2015data}). 

Robot navigation problems were solved by generating optimal trajectories from robot dynamics model. The trajectory optimization modules are based on dynamic programming, specifically differential dynamic programming and iterative linear quadratic regulator (\cite{yong1999stochastic,tassa2014control,li2004iterative}). These methods are essentially an open loop control and thus susceptible to model errors or disturbances in deployment. To address these problems, model predictive control (MPC) (\cite{camacho2013model}) generates controls given the systems current state by repeatedly solving the optimal control problem over a prediction horizon and only the first optimal control is executed leading to a closed-loop control form.

Although classical control provide abundant models that allow for efficient algorithms for high-dimensional tasks. However, these models can only approximately the dynamics of the mobile robot. To cope with this challenge, learning-based methods obtain robot's dynamics directly from the data.

\subsection{Terrain Classification Methods}
Terrain classification methods make use of robot's sensory data to classify the terrains the robot traverses over. 
Many of the earlier methods were developed to address the navigational challenge of larger vehicles (\cite{bekker1969introduction}) and classification was done in a manual fashion. 
There was an enormous amount of research on terramechanics (\cite{meyerhof1970introduction}), which is the guidance of autonomous vehicles through rough terrains.  
A similar genre of speed selection was used in DARPA grand challenge, to address the problem of unstructured terrain navigation though fixing an optimal speed for each terrain type by collecting data for months (\cite{urmson2004high}). 
Then methods were used to generate digital terrain maps and then fitted a Gaussian model to classify the terrains (\cite{shimoda2005potential,sidek2013exploiting}). 
A few techniques were also designed to use terrain ruggedness data to evaluate navigational behaviors in an online fashion for ground vehicles (\cite{brooks2005vibration,sadhukhan2004terrain}).

In learning based methods, terrain classification was usually performed by learning from terrain data to classify the terrain for navigational behaviors. 
For example, an Support Vector Machine (SVM) based classifier was used on features learned from Hidden Markov Models to identify terrains (\cite{trautmann2011mobility}). 
Color based terrain classification was performed to label obstacles and generate navigational behaviors (\cite{manduchi2005obstacle}). 
More recently, semantic segmentation methods based on deep learning (\cite{long2015fully}) were successfully used to classify off-road terrains (\cite{jiang2020rellis,wigness2019rugd}) and perform the task of terrain navigation. 

Although effective, these methods solely rely in discrete categorization of terrain types. In real world environments, unstructured off-road terrains have a wide variety of characteristics and cannot be categorized into distinctive types. Thus these methods cannot enable in a mobile robot to adapt in real-world  off-road field environments. 

\subsection{Learning-based Adaptation Methods}

This category of methods focus on enabling robots to acutely adapt to the various terrains. High-level models were developed and implemented to solve the general problem of robot terrain adaptation in (\cite{he2019underactuated,nikolaidis2017human,papadakis2013terrain,parker1996alliance,parker2000lifelong}). 
A different genre of methods used case-based reasoning to adapt a robot in dynamic environments (\cite{watson1994case}). 
Many model-based approaches were formulated to learn a single global model using function approximators such as the Gaussian processes (\cite{ko2009gp,deisenroth2011pilco}) and neural networks (\cite{lenz2015deepmpc}). 
Some methods then used probabilistic Gaussian models (\cite{plagemann2008learning}) to learn terrain models and update the models efficiently using sparse approximation. 
The use of ground speed as an optimization variable and formulating methods for trading progress and velocity with changing terrains is observed in earlier works (\cite{fox1997dynamic}). 
Recently, methods were developed that used an array of inertial and ultrasonic sensors to recognize soil properties and perform terrain adaptation (\cite{nabulsi2006multiple}).

With the popularity of learning based methods in the field of robot learning, learning based terrain adaptation methods also started to gain significant attention due to their effectiveness and robustness (\cite{thrun1998lifelong}). 
Terrain adaptation was initially addressed from the perspective of online learning where the model parameters were updated continuously in the execution stage (\cite{kleiner2002towards}). 
Although successful, these methods lacked the ability to adapt on the fly with the changes in unstructured terrains. 
This led to development of open-loop control methods that generated navigational behaviors according to the terrain characteristics (\cite{plagemann2008learning}). 
Reinforcement learning based navigation was employed to generate stable locomotion patterns for terrain adaptation (\cite{erden2008free}). 
Techniques based on inverse reinforcement learning were employed to mimic expert navigational behaviors to achieve near human-level maneuverability (\cite{wigness2018robot}).
More recently, an approach for terrain aware apprenticeship learning has showed promising results for ground robot navigation on unstructured terrains (\cite{siva2019robot}). 

Given the promise of previous learning based terrain adaptation methods, almost all previous approaches focus on learning the expected navigational behaviors, without addressing the need of self-reflection to generate consistent navigational behaviors in these terrains.

\color{black}
\section{Approach}\label{sec:RTA}

In this section, we discuss our novel principled approach to enable ground robots
to adapt their navigational behaviors to unstructured off-road terrains and perform self-reflection to generate consistent navigational behaviors.
An overview of our approach is shown in Figure \ref{Approach_IJRR_2}.

\textbf{Notation:}
We denote scalars using lowercase italic letters (e.g., $m \in \mathbb{R}$),
vectors using boldface lowercase letters (e.g., $\mathbf{m} \in \mathbb{R}^{p}$),
matrices using boldface capital letters, e.g., $\mathbf{M}=\{{m}_{j}^{i}\} \in \mathbb{R}^{p \times q}$ with its $i$-th row and $j$-th column denoted as $\mathbf{m}^{i}$ and $\mathbf{m}_{j}$, respectively.
We use boldface capital Euler script letters to denote tensors (i.e., 3D matrices), e.g., ${\mathbfcal{M}}=\{{{m}}_{j}^{i(k)}\} \in \mathbb{R}^{p \times q \times r}$. Unstacking tensor ${\mathbfcal{M}}$ along its height ($p$), width ($q$) and depth ($r$) provides slices of matrices $\mathbf{M}^{i}$, $\mathbf{M}_{j}$ and $\mathbf{M}^{(k)}$, respectively (\cite{rabanser2017introduction}).
The key variables used in our discussion are defined and briefly explained in Table \ref{tab:variables}.

\begin{table}[hbt]
\centering
\caption{
Definition of the key variables used in our approach.
}
\label{tab:variables}
\tabcolsep=0.2cm
\small
\begin{tabular}{ |c |c|}
\hline
Variables & Definition\\
\hline\hline
\small{$\mathbf{x}_{t} \in \mathbb{R}^{d}$} & \small{Terrain feature vector extracted at time $t$} \\
\small{$\mathbf{X} \in \mathbb{R}^{d \times c}$ } & \small{Terrain feature instance matrix} \\
\small{$\mathbfcal{X} \in \mathbb{R}^{d \times n \times c}$ } & \small{Feature tensor from $n$-different instances} \\
\small{$\mathbf{Z}\in \mathbb{Z}^{l \times n}$ } & \small{Terrain indicator matrix corresponding to $\mathbfcal{X}$} \\
\small{$\mathbf{Y}\in \mathbb{R}^{b \times n}$ } & \small{Expected robot navigational behaviors for $\mathbfcal{X}$} \\
\small{$\mathbf{A} \in \mathbb{R}^{b\times n}$ } & \small{Actual robot navigational behaviors for $\mathbfcal{X}$} \\
\small{$\mathbfcal{E} \in \mathbb{R}^{b \times n \times c}$ } &  \small{Behavior difference tensor} \\
\small{$\mathbfcal{W} \in \mathbb{R}^{l\times d \times c}$ } & \small{Weight tensor to encode terrain features} \\
\small{$\mathbfcal{V}\in \mathbb{R}^{b \times d\times c}$ } & \small{Weight tensor to learn estimated behaviors} \\
\small{$\mathbfcal{U} \in \mathbb{R}^{b\times b \times c}$ } & \small{Weight tensor to learn offset behaviors} \\
\hline
\small{$d$} & \small{Dimensionality of the feature vector}\\
\small{$c$} & \small{Past time steps used to acquire observations}\\
\small{$n$} & \small{Number of instances}\\
\small{$l$} & \small{Number of terrain types}\\
\small{$b$} & \small{Number of robot behavior controls}\\
\hline
\end{tabular}
\end{table}

\color{black}
\subsection{Terrain-Aware Ground Navigation}

We use multiple sensors installed on a ground robot to collect terrain observations,
including RGBD images, LiDAR scans and IMU readings,
at each time step when the robot traverses unstructured terrains.
At each time step $t$,
we extract $m$ types of features (such as visual features and local elevations) from the multi-sensory data.
These features are concatenated into a feature vector denoted as $\mathbf{x}^{(t)} \in \mathbb{R}^{d}$, where $d = \sum_{i=1}^{m}d_{i}$ with $d_i$ denoting the dimensionality of the $i$-th feature type.
Features extracted from a sequence of $c$ past time steps
are stacked into a matrix as a terrain feature instance denoted as $\mathbf{X}=[\mathbf{x}^{(t)};\dots;\mathbf{x}^{(t-c)}] \in \mathbb{R}^{d \times c}$.
To train our approach, we collect a set of $n$ feature instances that are obtained when the robot traverses over various terrains,
and denote this training set as a terrain feature tensor
 $\mathbfcal{X}=[\mathbf{X}_{1},\dots,\mathbf{X}_{n}] \in \mathbb{R}^{d \times n \times c}$.

We denote the terrain types that are associated with $\mathbfcal{X}$ as $\mathbf{Z}=[\mathbf{z}_{1},\dots,{\mathbf{z}_{n}}] \in \mathbb{Z}^{l \times n}$,
where $\mathbf{z}_{i} \in \mathbb{Z}^{l}$ denotes the vector of terrain types
and $l$ represents the number of terrain types.
Each element ${z}_{j}^{i} \in \{ 0,1\}$ indicates whether $\mathbf{X}_{i}$ has the $j$-th terrain type.
During training,
ground truth of the terrain types is provided by human experts.
Given $\mathbfcal{X}$ and $\mathbf{Z}$,
terrain recognition can be formulated as an optimization problem:
\begin{align}\label{eqn1}
& \min_{\mathbfcal{W}} & & \mathcal{L}_{lc}(\mathbfcal{W}\otimes\mathbfcal{X}-\mathbf{Z}) \nonumber \\
& \;\;\, \textrm{s.t.} & & \mathbfcal{W} \otimes \mathbfcal{W}^{\top} = \mathbfcal{I} \\
& & & \mathbfcal{W}\otimes\mathbfcal{W}=\mathbfcal{W}^\top \otimes(\mathbfcal{W}^{\top})^{\top} \nonumber
\end{align}
where $\mathbfcal{W} \in \mathbb{R}^{l\times d \times c}$ is a weight tensor used to encode the importance of each element in $\mathbfcal{X}$ towards estimating terrain types.
Each tensor element $w_{j}^{i(k)} \in \mathbfcal{W} $ denotes the weight of the $j$-th terrain feature from the $k$-th past time step to recognize the $i$-th terrain type.
 $\mathbf{W}^{i}$,
$\mathbf{W}_{j}$ and  $\mathbf{W}^{(k)}$ are the slices of matrices that are obtained by unstacking $\mathbfcal{W}$ along its
height, width and depth, respectively.
$\mathbfcal{I} \in \mathbb{R}^{l \times d \times c}$ is the identity tensor (\cite{qi2017transposes}),
which is used to mathematically define the orthogonality in tensors.
The operator $\otimes$ denotes the vector Kronecker tensor product that performs the matrix-wise multiplication of the two tensors (\cite{neudecker1969note}).
In Eq. (\ref{eqn1}), the tensor product $\otimes$  takes each terrain feature instance $\mathbf{X}_{i} \in \mathbfcal{X}$, and multiplies it with the weight tensor $\mathbfcal{W}$.

We adopt the Log-Cosh loss $\mathcal{L}_{lc}(\cdot) = log(cosh(\cdot))$ (\cite{chen2018log,xu2020towards})
in the objective function in Eq. (\ref{eqn1}) to encode the error of using terrain features $\mathbfcal{X}$ to recognize terrain types $\mathbf{Z}$,
through the learning model that is parameterized by  $\mathbfcal{W}$.
The advantage of using the Log-Cosh loss (e.g., over the $\ell_2$-loss) is twofold.
First, the Log-Cosh loss is robust to outliers,
because a linear error becomes a much smaller value using the logarithmic scale, thus it can reduce the learning bias over outliers.
Because observations acquired by a robot when navigating over unstructured off-road terrain are usually noisy,
the use of the Log-Cosh loss is desirable.
Second, $\mathcal{L}_{lc}(\cdot)$ provides a better optimization of $\mathbfcal{W}$ near the optimality.
This is because, near the optimal value of $\mathbfcal{W}$, the objective value computed by  $\mathcal{L}_{lc}(\cdot)$ in Eq. (\ref{eqn1}) can change significantly when $\mathbfcal{W}$ shows small changes (\cite{chen2018log}).
The two constraints together in Eq. (\ref{eqn1}) are the necessary conditions for orthogonality.
Orthogonality in tensors makes each slice of matrices that stack up a tensor to be full rank,
which means that each slice of matrices is independent and  orthogonal (or perpendicular) to each other and thus sharing the least similarity (\cite{li2018orthogonal}).
In our formulation,
the orthogonality makes $\mathbf{W}^{i}, i=1,\dots, l$ that correspond to all the terrain types to be independent of each other.

\begin{figure*}[t]
\centering
\includegraphics[width=0.995\textwidth]{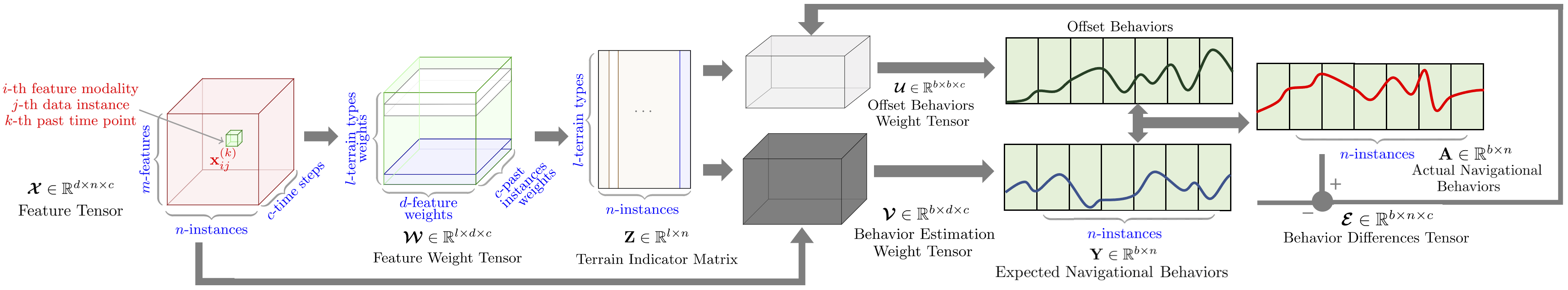}
\caption{
Overview of our proposed method for self-reflective terrain-aware robot adaptation 
to enable ground robots
to adapt their navigational behaviors to unstructured off-road terrains and perform self-reflection to generate consistent navigational behaviors under the unified constrained optimization framework.
}\label{Approach_IJRR_2}
\end{figure*}

The first main novelty of this paper is that we propose a new method for terrain-aware navigation adaptation by joint terrain classification and behavior learning under a unified constrained optimization framework.
This is a achieved by projecting an observation's feature space into a terrain type space and then further projects the terrain type space into a robot navigational behavior space.

Formally, we denote the robot's navigational behaviors as
$\mathbf{Y}=[\mathbf{y}_{1},\dots,\mathbf{y}_{n}] \in \mathbb{R}^{b \times n}$,
where $\mathbf{y}_{i} \in \mathbb{R}^{b}$  represents the navigational behaviors that the robot is expected to execute while traversing over terrain instance $\mathbf{X}_{i}$,
and $b$ denotes the number of distinctive robot controls (e.g., linear and angular velocities).
The expected behaviors can be obtained through demonstrations recorded from human experts when the experts control ground robots to traverse various terrains.
Similar to terrain classification,
navigational behaviors are learned using a history of observations from the pas $c$ time steps,
which allows our formulation to implicitly consider the dynamics of a ground robot.
Then, the problem of terrain-aware navigation adaptation through joint terrain recognition and behavior generation can be formulated as:
\begin{align}\label{eqn2}
& \min_{\mathbfcal{W},\mathbfcal{V}} & & \mathcal{L}_{lc}(\mathbfcal{W}\otimes\mathbfcal{X}-\mathbf{Z}) + \mathcal{L}_{lc}(\mathbfcal{V}\otimes\mathbfcal{W}\otimes\mathbfcal{X}-\mathbf{Y}) \nonumber \\
& \;\;\; \textrm{s.t.} & & \mathbfcal{W}\otimes\mathbfcal{W}^{\top} = \mathbfcal{I} \\
& & & \mathbfcal{W}\otimes\mathbfcal{W}=\mathbfcal{W}^\top \otimes(\mathbfcal{W}^{\top})^{\top} \nonumber
\end{align}
where $\mathbfcal{V} \in \mathbb{R}^{b \times d \times c}$ is a weight tensor that indicates
the importance of the input features $\mathbfcal{X}$ to generate navigational behaviors $\mathbf{Y}$,
and each element $v_{j}^{i(k)}$ represents the weight of using $j$-th terrain feature from $k$-th past time step towards generating the $i$-th navigational behavior.
We further adopt $\mathbf{V}^{i}$, $\mathbf{V}_{j}$ and  $\mathbf{V}^{(k)}$ to denote the slices of matrices obtained by unstacking  $\mathbfcal{V}$ along its height, width and depth, respectively.

The second term in the objective function in Eq. (\ref{eqn2}) is a loss that encodes the difference between the learning model and the demonstrated robot's navigational behaviors.
This loss aims to learn a projection from a history of observations to terrain types,
and then to use the terrain types to generate the robot's behaviors,
thus enabling terrain awareness for our navigational behavior generation method.

Different terrain features capture different characteristics of unstructured terrains,
e.g., color, texture, and slope.
These features typically have different contributions towards terrain recognition and behavior generation.
Similarly, features extracted in specific time steps can be more important than others.
Thus, it is essential to identify the most discriminative features when multiple types of features from a history of time steps are used.
To achieve this capability, we design a new regularization term called the behavior norm, which is mathematically defined as:
\begin{equation}\label{eqn3}
    \Vert \mathbfcal{V}\otimes\mathbfcal{W} \Vert_{B} = \sum_{j=1}^{m}\sum_{k=1}^{c}\Vert  \mathbf{V}_{j}^{(k)}\mathbf{W}_{j}^{(k)} \Vert_{F}
\end{equation}
where $\mathbf{V}_{i}^{(k)}$ and $\mathbf{W}_{i}^{(k)}$ denote the slices of matrices of $\mathbfcal{V}$ and $\mathbfcal{W}$ obtained at
the $k$-th past time step for the $i$-th type of features, respectively.
This norm enforces sparsity between feature modalities obtained over the past $c$ time steps
in order to identify the most discriminative features and time steps during training.

Finally, the problem of terrain-aware behavior generation for terrain adaptation is formulated as:
\begin{align}\label{eqn4}
& \min_{\mathbfcal{W},\mathbfcal{V}} & & \mathcal{L}_{lc}(\mathbfcal{W}\otimes\mathbfcal{X}-\mathbf{Z}) + \mathcal{L}_{lc}(\mathbfcal{V}\otimes\mathbfcal{W}\otimes\mathbfcal{X}-\mathbf{Y}) \nonumber \\
& & & +\lambda_{1}\Vert \mathbfcal{V}\otimes\mathbfcal{W} \Vert_{B}  \nonumber \\
& \;\;\; \textrm{s.t.} & & \mathbfcal{W}\otimes\mathbfcal{W}^{\top} = \mathbfcal{I} \\
& & & \mathbfcal{W}\otimes\mathbfcal{W}=\mathbfcal{W}^\top \otimes(\mathbfcal{W}^{\top})^{\top} \nonumber
\end{align}
where $\lambda_{1}$ is a trade-off hyperparamter to control the amount of regularization.
The problem formulation in Eq. (\ref{eqn4}) allows for terrain-aware navigational behavior generation.
However, ground robots operating in an unstructured environment over a long-period of time often experience changes in their own functionalities, 
such as damages (e.g., failed motors and flat tires), natural wear and tear (e.g., reduced tyre traction), and changed robot configurations (e.g., varying payload).
These negative effects, also called \emph{setbacks} reduce the effectiveness of robot navigation
and cause the robot's actual navigational behaviors not to match the planned or expected behaviors.

\subsection{Self-Reflection for Consistent Navigational Behavior Generation}



The second main novelty of this paper is to propose the idea of self-reflection
for a ground robot to generate consistent navigational behaviors
by adapting to changes in the robot's functionalities.
In psychology, self-reflection is considered as the humans' capability of modulating our behaviors by being aware of ourselves (\cite{marcovitch2008self,smith2002self,ardelt2018importance}).
We use the term self-reflection in robotics to refer to the robot's capability of adopting self-awareness
(e.g., knowing it has slower speeds than expected) to modulate the terrain-aware behavior generation.
Through self-reflection,
we enable a ground robot to generate terrain-aware navigational behaviors that are also conditioned on its self-awareness.
The robot can be self-aware in the way that it continuously monitors the difference between the actual and expected navigational behaviors,
where this difference is caused by setbacks or negative effects.
Then, our method adapts the parameters of terrain-aware behavior generation according to robot's self-awareness,
and computes an offset control to generate consistent navigational behaviors without explicitly modeling the robot setbacks.

Mathematically, we denote the robot's actual navigational behaviors as $\mathbf{A}=[\mathbf{a}_{1},\dots,\mathbf{a}_{n}] \in \mathbb{R}^{b\times n}$,
where $\mathbf{a}_{i} \in \mathbb{R}^{b}$ is the actual navigational behaviors
corresponding to the $i$-th instance $\mathbf{X}_{i}$.
The actual behaviors can be estimated when the ground robot navigates over unstructured terrains using pose estimation methods
such as SLAM or visual odometry (\cite{legoloam2018,shan2016brief}).
Because of setbacks, the robot's actual behaviors usually do not match its expected or planned navigational behaviors.
We compute the difference in navigational behaviors over the past $c$ time steps as
$\mathbf{E} = [({\mathbf{a}}^{(t)}-\mathbf{y}^{(t)}),\dots,(\mathbf{a}^{(t-c)}-\mathbf{y}^{(t-c)})] \in \mathbb{R}^{b \times c}$,
and further denote the behavior differences for all instances $\mathbfcal{X}$
as a tensor  $\mathbfcal{E}=[\mathbf{E}_{1},\dots,\mathbf{E}_{n}] \in \mathbb{R}^{b \times n \times c}$.

Then, we introduce a novel loss function to encode self-reflection for achieving consistent terrain-aware navigational behavior generation as follows:
\begin{equation}\label{eqn5}
\mathcal{L}_{lc}( \mathbfcal{U}\otimes\mathbfcal{W}\otimes\mathbfcal{E} -(\mathbf{A}-\mathbf{Y}))
\end{equation}
where $\mathbfcal{U} = [\mathbf{U}^{(1)},\dots,\mathbf{U}^{(c)}] \in \mathbb{R}^{b\times b \times c}$ is the weight tensor
with $\mathbf{U}^{(k)} \in \mathbb{R}^{b\times c}$ indicating the importance of the $k$-th past behavior difference ${\mathbf{a}}^{(k)}-\mathbf{y}^{(k)}$ to generate offset behaviors $\mathbfcal{U}\otimes\mathbfcal{W}\otimes\mathbfcal{E}$.
Because the loss function in Eq. (\ref{eqn5}) encodes the error between the behavior differences and the generated offset behaviors,
minimizing this loss function is equivalent to minimizing the behavior differences using the generated offset behaviors, or maximizing the consistency of the actual and expected behaviors.
By monitoring the differences over the past $c$ time steps to improve the navigational behavior at the current time step,
this loss function allows our approach to be self-reflective and act as a closed-loop controller
for consistent behavior generation while being terrain aware.

During robot navigation, because of inertia, historical data from the past time steps often contributes differently towards generating offset behaviors.
For example, a heavier payload increases the inertia of the robot; thus a longer time history needs to be considered for generating offset behaviors.
Thus, we introduce an additional regularization term to learn the most informative behavior differences in the past time steps for efficient self-reflection,
which is defined as:
\begin{equation}\label{eqn6}
    \Vert \mathbfcal{U} \Vert_{R} = \sum_{k=1}^{c}\Vert \mathbf{U}^{(k)} \Vert_{F}
\end{equation}
where $\mathbf{U}^{(k)}$ denotes the slice of matrices obtained from $\mathbfcal{U}$ at the $k$-th past time step.

Finally, integrating all components together,
we formulate the problem of self-reflective terrain-aware robot adaptation for generating consistent robot navigational behaviors
as a regularized optimization problem in a unified mathematical framework:
\begin{align}\label{eqn7}
& \min_{\mathbfcal{W},\mathbfcal{V},\mathbfcal{U}} & & \mathcal{L}_{lc}(\mathbfcal{W}\otimes\mathbfcal{X}-\mathbf{Z}) + \mathcal{L}_{lc}(\mathbfcal{V}\otimes\mathbfcal{W}\otimes\mathbfcal{X}-\mathbf{Y}) \nonumber \\
& & & + \mathcal{L}_{lc}( \mathbfcal{U}\otimes\mathbfcal{W}\otimes\mathbfcal{E} -(\mathbf{A}-\mathbf{Y})) \nonumber \\
& & & + \lambda_{1}\Vert \mathbfcal{V}\otimes\mathbfcal{W} \Vert_{B} + \lambda_{2}\Vert \mathbfcal{U} \Vert_{R}  \nonumber \\
& \;\;\; \textrm{s.t.} & & \mathbfcal{W}\otimes\mathbfcal{W}^{\top} = \mathbfcal{I} \\
& & & \mathbfcal{W}\otimes\mathbfcal{W}=\mathbfcal{W}^\top \otimes(\mathbfcal{W}^{\top})^{\top} \nonumber
\end{align}
where $\lambda_{2}$, similar to $\lambda_{1}$, is a hyperparameter to control the trade-off between the loss functions and regularization terms.
An illustration of our complete approach for self-reflective terrain-aware robot adaptation is presented in Figure \ref{Approach_IJRR_2}.

After computing the optimal values of the weight tensors $\mathbfcal{W}$, $\mathbfcal{U}$ and $\mathbfcal{V}$ according to Algorithm 1
in the training phase,
a robot can apply our self-reflective terrain-aware adaptation method to generate consistent navigational behaviors during execution.
At each time step $t$ in the execution phase,
the robot computes multi-model features $\mathbf{X}_{t}$ from observations obtained by its onboard sensors over the past $c$ time steps.
The robot also estimates the corresponding actual behaviors using pose estimation techniques 
(such as SLAM or visual odometry (\cite{legoloam2018,shan2016brief}))
and computes the matrix of the behavior differences $\mathbf{E}_{t}$.
Then our approach can be used by the robot to generate self-reflective terrain-aware navigational behaviors by:
\begin{eqnarray}\label{eq:ControlGen}
    \mathbf{y} = \mathbfcal{V}\otimes\mathbfcal{W}\otimes\mathbf{X}_{t} +  \mathbfcal{U}\otimes\mathbfcal{W}\otimes\mathbf{E}_{t}
\end{eqnarray}
The first term in Eq. (\ref{eq:ControlGen}) generates the navigational behaviors that are aware of terrain types,
which allows a ground robot to adapt its navigational behaviors to unstructured terrains.
The second term in Eq. (\ref{eq:ControlGen}) provides offset controls based on the monitoring of the differences between the expected and actual behaviors
in order to compensate for the setbacks and improve consistency in navigational behaviors.

\begin{algorithm}[t]
    \SetKwInOut{Input}{Input}
    \SetKwInOut{Output}{Output}
    \SetKwInOut{return}{return}

    \Input{$\mathbfcal{X}\in\mathbb{R}^{d\times n \times c}$, $\mathbf{Y}\in\mathbb{R}^{b \times n}$,
    $\mathbf{A}\in\mathbb{R}^{b \times n}$, and
    $\mathbfcal{E}\in\mathbb{R}^{b \times n\times c}$}

    \Output{The weight tensors $\mathbfcal{W}\in \mathbb{R}^{l \times d \times c}$, $\mathbfcal{V}\in \mathbb{R}^{b \times d \times c}$ and $\mathbfcal{U}\in \mathbb{R}^{b \times b \times c}$ }

    Initialize $\mathbfcal{W}\in \mathbb{R}^{l \times d \times c}$, $\mathbfcal{V}\in \mathbb{R}^{b \times d \times c}$ and $\mathbfcal{U}\in \mathbb{R}^{b \times b \times c}$;

    \While{not converge}
    {
    Calculate the block diagonal matrix $\mathbf{Q}^{B}$ with $j$-th column and $k$-th row element  given as $\frac{\mathbf{I}_{1}}{\sum_{i=j}^{m}\sum_{k=1}^{c}\Vert  \mathbf{V}_{j}^{(k)}\mathbf{W}_{j}^{(k)} \Vert_{F}}$;
    
    Calculate the block diagonal matrix $\mathbf{Q}^{l}$ with $i$-th diagonal block given as $\frac{\mathbf{I}^{l}}{\Vert \mathbf{W}^{i}(\mathbf{W}^{i})^{\top} - \mathbf{I}^{l}\Vert_{F}}$;
     
    Calculate the block diagonal matrix $\mathbf{Q}_{d}$ with $j$-th diagonal block given as $\frac{\mathbf{I}_{d}}{\Vert\mathbf{W}_{j}(\mathbf{W}_{j})^{\top} - \mathbf{I}_{d}\Vert_{F}}$;
     
    Calculate the block diagonal matrix $\mathbf{Q}^{(c)}$ with $k$-th diagonal block given as $\frac{\mathbf{I}^{(c)}}{\Vert\mathbf{W}^{(k)}(\mathbf{W}^{(k)})^{\top} - \mathbf{I}^{c}\Vert_{F}}$;
     
    Compute each slice of matrices $\mathbf{W}^{i}$ from Eq. (\ref{eqn11});
    
    Calculate the block diagonal matrix $\mathbf{O}^{B}$ with $j$-th column and $k$-th row element  given as $\frac{\mathbf{I}_{1}}{\sum_{i=j}^{m}\sum_{k=1}^{c}\Vert  \mathbf{V}_{j}^{(k)}\mathbf{W}_{j}^{(k)} \Vert_{F}}$;  
    
    Compute the matrix $\mathbf{V}^{i}$ from Eq. (\ref{eqn12});  
      
    Calculate the block diagonal matrix $\mathbf{P}^{(R)}$ with $k$-th diagonal block given as $\frac{\mathbf{I}_{2}}{\sum_{k=t}^{t-c}\Vert  \mathbf{U}^{(k)} \Vert_{F}}$;

    Compute the matrix $\mathbf{U}^{i}$ from Eq. (\ref{eqn13});
    }
    \textbf{return: } $\mathbfcal{W}\in \mathbb{R}^{l \times d \times c}$, $\mathbfcal{V}\in \mathbb{R}^{b \times d \times c}$ and $\mathbfcal{U}\in \mathbb{R}^{b \times b \times c}$

 \caption{The implemented algorithm to solve the formulated constrained regularized optimization problem in Eq. (\ref{eqn7}).} 
 \label{alg1}
\end{algorithm}

\section{Optimization Algorithm}\label{sec:OPT}

In this section, we implement a new iterative algorithm, 
as shown in in Algorithm \ref{alg1}, to compute the optimal solution to the formulated constrained regularized optimization problem in Eq. (\ref{eqn7}).
The optimization problem is challenging to solve because of the constrains added to the objective function and the two non-smooth structured regularization terms.

To solve the constrained optimization problem in Eq. (\ref{eqn7}), we first convert the constrains into a matrix form as:
\vspace{-10pt}
\begin{align}\label{eqn8}
\text{\footnotesize{$\min_{\mathbfcal{W},\mathbfcal{V},\mathbfcal{U}}$}} &  \scriptstyle  \mathcal{L}_{lc}(\mathbfcal{W}\otimes\mathbfcal{X}-\mathbf{Z}) + \mathcal{L}_{lc}(\mathbfcal{V}\otimes\mathbfcal{W}\otimes\mathbfcal{X}-\mathbf{Y}) \nonumber  \\ &\scriptstyle +\mathcal{L}_{lc}( \mathbfcal{U}\otimes\mathbfcal{W}\otimes\mathbfcal{E} -(\mathbf{A}-\mathbf{Y}))
     +\lambda_{1}\Vert \mathbfcal{V}\otimes\mathbfcal{W} \Vert_{B} + \lambda_{2}\Vert \mathbfcal{U} \Vert_{R} \nonumber \\
     \textrm{\footnotesize{s.t}} \quad &\scriptstyle \mathbf{W}^{i}(\mathbf{W}^{i})^{\top} = \mathbf{I}^{l}; \forall i = 1,\dots,l   \\
& \scriptstyle \mathbf{W}_{j}(\mathbf{W}_{j})^{\top} = \mathbf{I}_{d}; \forall j = 1,\dots,d \nonumber  \\
& \scriptstyle \mathbf{W}^{(k)}(\mathbf{W}^{(k)})^{\top} = \mathbf{I}^{(c)}; \forall k = 1,\dots,c \nonumber
\end{align}
where $\mathbf{I}^{l}$, $ \mathbf{I}_{d}$ and $\mathbf{I}^{(c)}$ are identity matrices of size $l,d,c$.
In Eq. (\ref{eqn8}), the constrains are written using a matrix form,
This form still enforces orthogonality, the same as the tensor form in Eq. (\ref{eqn7}),
as the matrix-form constraints make each column in each slice of matrices $\mathbf{W}^{i}$, $\mathbf{W}_{j}$ and $\mathbf{W}^{(k)}$ to be full rank.
Thus, each slice of matrix in tensor $\mathbfcal{W}$ is also full rank and $\mathbfcal{W}$ is orthogonal. 

Then we can rewrite the objective function in Eq. (\ref{eqn8}) using Lagrangian multipliers $\lambda^{l}, \lambda_{d}$ and $\lambda^{(c)}$ as:
\begin{align}\label{eqn9}
\text{\footnotesize{$\min_{\mathbfcal{W},\mathbfcal{V},\mathbfcal{U}}$}} &  \scriptstyle \mathcal{L}_{lc}(\mathbfcal{W}\otimes\mathbfcal{X}-\mathbf{Z}) \scriptstyle +  \mathcal{L}_{lc}(\mathbfcal{V}\otimes\mathbfcal{W}\otimes\mathbfcal{X}-\mathbf{Y})
\scriptstyle +  \scriptstyle \mathcal{L}_{lc}( \mathbfcal{U}\otimes\mathbfcal{W}\otimes\mathbfcal{E} -(\mathbf{A}-\mathbf{Y})) \nonumber \\ \scriptstyle + &  \scriptstyle \lambda_{1}\Vert \mathbfcal{V}\otimes\mathbfcal{W} \Vert_{B} + \lambda_{2}\Vert \mathbfcal{U} \Vert_{R} +  \Sigma_{i=1}^{l}\lambda^{l} \Vert \mathbf{W}^{i}(\mathbf{W}^{i})^{\top} - \mathbf{I}^{l}\Vert_{F}  \\
\scriptstyle + & \scriptstyle \Sigma_{j=1}^{d}\lambda_{d} \Vert\mathbf{W}_{j}(\mathbf{W}_{j})^{\top} - \mathbf{I}_{d}\Vert_{F}+ \Sigma_{k=1}^{c} \lambda^{(c)} \Vert \mathbf{W}^{(k)}(\mathbf{W}^{(k)})^{\top} - \mathbf{I}^{(c)}\Vert_{F}   \nonumber
\end{align}
Each of the Lagrangian multipliers is designed to have a high value (i.e., $\lambda^{l}, \lambda_{d}, \lambda^{(c)} >\!\!> 0$), such that even a small variation in orthogonality in each slices of matrix from weight tensor $\mathbfcal{W}$ can result in a high cost. To compute the optimal weight tensor $\mathbfcal{W}$, we minimize Eq. (\ref{eqn9}) with respect to $\mathbf{W}^{i}$, $i=1,\dots,l$, resulting in:
\begin{align}\label{eqn11}
&\scriptstyle \tanh{(\mathbfcal{W}\otimes\mathbfcal{X}-\mathbf{Z})}((\mathbf{X}^{i})^\top \mathbf{W}^{i})  \tanh{(\mathbfcal{V}\otimes\mathbfcal{W}\otimes\mathbfcal{X}-\mathbf{Y})}((\mathbf{X}^{i}\mathbf{V}^{i})^\top\mathbf{W}^{i}) \nonumber \\
& \scriptstyle + \tanh{( \mathbfcal{U}\otimes\mathbfcal{W}\otimes\mathbfcal{E} -(\mathbf{A}-\mathbf{Y}))}((\mathbf{E}^{i}\mathbf{V}^{i})^\top \mathbf{W}^{i}) \\
& \scriptstyle +(\lambda_{1}\mathbf{Q}^{B}+\Sigma_{i=1}^{l}\lambda^{l}\mathbf{Q}^{l}+\Sigma_{j=1}^{d}\lambda_{d}\mathbf{Q}_{d}+\Sigma_{k=1}^{c}\lambda^{(c)}\mathbf{Q}^{(c)})\mathbf{W}^{i}=0 \nonumber
\end{align}
\normalsize{where $\mathbf{Q}^{B},\mathbf{Q}^{l},\mathbf{Q}_{d}$ and $\mathbf{Q}^{(c)}$ denote block diagonal matrices that are dependent on the weight tensor $\mathbfcal{W}$. Mathematically, we express each element in the $j$-th column and $k$-th row in $\mathbf{Q}^{B}$ as $\frac{1}{\sum_{i=j}^{m}\sum_{k=1}^{c}\Vert\mathbf{V}_{j}^{(k)}\mathbf{W}_{j}^{(k)}\Vert_{F}}$. 
Each of the $i$-th block in $\mathbf{Q}^{l}$ is given by $\frac{\mathbf{I}^{i}}{\Vert \mathbf{W}^{i}(\mathbf{W}^{i})^{\top}-\mathbf{I}^{l}\Vert_{F}}$. The $j$-th diagonal block in $\mathbf{Q}_{d}$ is given as $\frac{\mathbf{I}_{d}}{\Vert\mathbf{W}_{j}(\mathbf{W}_{j})^{\top} - \mathbf{I}_{d}\Vert_{F}}$ and each block diagonal element in $\mathbf{Q}^{(c)}$ is given by $\frac{\mathbf{I}^{(c)}}{\Vert\mathbf{W}^{(k)}(\mathbf{W}^{(k)})^{\top} - \mathbf{I}^{(c)}\Vert_{F}}$. Because each slice of matrix $\mathbf{W}^{i}$ and the block diagonal matrices $\mathbf{Q}^{B},\mathbf{Q}^{l},\mathbf{Q}_{d}$, $\mathbf{Q}^{(c)}$ are interdependent,
we need an iterative algorithm to compute them. }

We use the optimal slices of weight matrices $\mathbf{W}^{i}$ to solve weight tensors $\mathbfcal{V}$ and $\mathbfcal{U}$. Accordingly, we differentiate Eq. (\ref{eqn9}) with respect to $\mathbf{V}^{i}$ and $\mathbf{U}^{i}$, $i=1,\dots,b$, results in:
\begin{align}\label{eqn12}
&\scriptstyle \tanh{(\mathbfcal{V}\otimes\mathbfcal{W}\otimes\mathbfcal{X}-\mathbf{Y})}((\mathbf{W}^{i}\mathbf{X}^{i})^\top\mathbf{V}^{i}-\mathbf{WX}^{i}\mathbf{y}^{i}) + \lambda_{1}\mathbf{O}^{B} = 0
\end{align}
and
\begin{align}\label{eqn13}
&\scriptstyle \tanh{( \mathbfcal{U}\otimes\mathbfcal{W}\otimes\mathbfcal{E} -(\mathbf{A}-\mathbf{Y}))}((\mathbf{W}^{i}\mathbf{E}^{i})^\top \mathbf{U}^{i}-\mathbf{E}^{i})
+ \lambda_{2}\mathbf{P}^{(R)} = 0
\end{align}
where $\mathbf{O}^{(B)}$ and $\mathbf{P}^{(R)}$ represent block diagonal matrices that are dependent on $\mathbfcal{V}$ and $\mathbfcal{U}$ respectively. 
Mathematically, we calculate each element in the $j$-th column and $k$-th row in $\mathbf{O}^{B}$ as $\frac{1}{\sum_{i=j}^{m}\sum_{k=1}^{c}\Vert\mathbf{V}_{j}^{(k)}\mathbf{W}_{j}^{(k)}\Vert_{F}}$.
The $k$-th block diagnoal element in $\mathbf{P}^{(R)}$ is calculated by  $\frac{\mathbf{I}_{(k)}}{\sum_{k=1}^{c}\Vert  \mathbf{U}^{(k)} \Vert_{F}}$.
The optimal values of $\mathbf{V}^{i}$ and $\mathbf{U}^{i}$ are employed to update $\mathbf{W}^{i}$ in the next iteration.
Similarly, $\mathbf{U}^{i}$ and $\mathbf{P}$ are interdependent, and thus an iterative algorithm is needed to update their values.

In the following, we prove that Algorithm \ref{alg1} decreases the value of the objective function in Eq. (\ref{eqn7}) with each iteration and converges to the global optimal solution. But first, we present a lemma from \cite{nie2010efficient}:

\begin{lemma}\label{lemma1}
\normalsize{Given any two matrices $\mathbf{A}$ and $\mathbf{B}$, the following inequality relation holds:}
\begin{equation}
\Vert\mathbf{B}\Vert_F - \frac{\Vert\mathbf{B}\Vert_F^2}{2\Vert\mathbf{A}\Vert_F}
\leq
\Vert\mathbf{A}\Vert_F - \frac{\Vert\mathbf{A}\Vert_F^2}{2\Vert\mathbf{A}\Vert_F}
\end{equation}
\end{lemma}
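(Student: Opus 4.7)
The plan is to reduce the claim to a perfect square inequality via elementary algebra, after assuming $\Vert\mathbf{A}\Vert_F > 0$ (the degenerate case $\mathbf{A}=\mathbf{0}$ is either excluded by the statement, since the right-hand side would have a $0/0$ term, or treated as a trivial boundary). The essential observation is that the inequality involves the matrices only through their Frobenius norms, so I would immediately reduce to scalars by setting $a=\Vert\mathbf{A}\Vert_F$ and $b=\Vert\mathbf{B}\Vert_F$ with $a>0$ and $b\geq 0$.

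First I would simplify the right-hand side: $a - \tfrac{a^2}{2a} = \tfrac{a}{2}$. Therefore the claim becomes the scalar inequality $b - \tfrac{b^2}{2a} \leq \tfrac{a}{2}$. Next I would clear the denominator by multiplying through by the positive quantity $2a$, which turns the inequality into $2ab - b^2 \leq a^2$, or equivalently
\begin{equation*}
0 \;\leq\; a^2 - 2ab + b^2 \;=\; (a-b)^2.
\end{equation*}
This is obviously true, with equality iff $a=b$, i.e.\ $\Vert\mathbf{A}\Vert_F = \Vert\mathbf{B}\Vert_F$. Reversing the chain of equivalences (all steps are reversible because $2a>0$) finishes the proof.

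There is essentially no obstacle: the lemma is a thinly disguised restatement of the AM–GM-style fact $(a-b)^2 \geq 0$, dressed up in matrix notation. The only minor care required is to note that the Frobenius norm is a nonnegative real number, so all the algebraic manipulations are valid and the multiplication by $2a$ preserves the direction of the inequality provided $\mathbf{A}\neq\mathbf{0}$. The significance of the lemma for the surrounding algorithm analysis is that it provides the per-iteration descent certificate used to show monotone decrease of the objective in Eq.~(\ref{eqn7}); in the convergence argument it will be applied with $\mathbf{A}$ being the previous iterate's relevant block and $\mathbf{B}$ the updated one, which is why the bound is stated asymmetrically in $\mathbf{A}$ and $\mathbf{B}$ rather than in the cleaner symmetric form $\tfrac{1}{2}(a+b) \geq \sqrt{ab}$.
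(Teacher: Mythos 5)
Your proof is correct, and it coincides with the standard argument: the paper itself does not prove this lemma but imports it from \cite{nie2010efficient}, where the proof is precisely your reduction to the scalar inequality $(a-b)^2 \geq 0$ with $a=\Vert\mathbf{A}\Vert_F$, $b=\Vert\mathbf{B}\Vert_F$ (together with the implicit assumption $\Vert\mathbf{A}\Vert_F\neq 0$, which you rightly flag). Nothing further is needed.
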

\begin{theorem}\label{thm1}
\normalsize{Algorithm \ref{alg1} converges to the global optimal solution of the constrained regularized optimization problem in Eq. (\ref{eqn7})}.
\end{theorem}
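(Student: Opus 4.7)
The plan is to prove Theorem \ref{thm1} by treating Algorithm \ref{alg1} as an iteratively reweighted scheme in which the block diagonal matrices $\mathbf{Q}^{B},\mathbf{Q}^{l},\mathbf{Q}_{d},\mathbf{Q}^{(c)},\mathbf{O}^{B},\mathbf{P}^{(R)}$ act as auxiliary reweighting terms that linearize the non-smooth regularizers $\|\mathbfcal{V}\otimes\mathbfcal{W}\|_{B}$, $\|\mathbfcal{U}\|_{R}$, and the penalized orthogonality terms $\|\mathbf{W}^{i}(\mathbf{W}^{i})^{\top}-\mathbf{I}^{l}\|_{F}$, $\|\mathbf{W}_{j}(\mathbf{W}_{j})^{\top}-\mathbf{I}_{d}\|_{F}$, $\|\mathbf{W}^{(k)}(\mathbf{W}^{(k)})^{\top}-\mathbf{I}^{(c)}\|_{F}$ in Eq.~(\ref{eqn9}). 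Let $\mathcal{J}(\mathbfcal{W},\mathbfcal{V},\mathbfcal{U})$ denote the objective in Eq.~(\ref{eqn9}). I will denote the iterates after the $t$-th pass of the while-loop by $\mathbfcal{W}_{t},\mathbfcal{V}_{t},\mathbfcal{U}_{t}$ and the corresponding reweighting matrices by $\mathbf{Q}^{B}_{t},\mathbf{Q}^{l}_{t},\mathbf{Q}_{d,t},\mathbf{Q}^{(c)}_{t},\mathbf{O}^{B}_{t},\mathbf{P}^{(R)}_{t}$.

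First, I would form the surrogate function $\widetilde{\mathcal{J}}_{t}$ obtained from $\mathcal{J}$ by replacing every non-smooth term $\|\mathbf{M}\|_{F}$ that contains a tensor to be optimized by its quadratic upper surrogate $\frac{\|\mathbf{M}\|_{F}^{2}}{2\|\mathbf{M}_{t}\|_{F}}+\frac{\|\mathbf{M}_{t}\|_{F}}{2}$, where $\mathbf{M}_{t}$ is the corresponding matrix slice evaluated at iteration $t$. By construction, the first-order optimality conditions of $\widetilde{\mathcal{J}}_{t}$ with respect to $\mathbf{W}^{i},\mathbf{V}^{i},\mathbf{U}^{i}$ are exactly Eqs.~(\ref{eqn11}), (\ref{eqn12}), (\ref{eqn13}) computed by Algorithm~\ref{alg1}. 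Hence each iteration of the algorithm satisfies $\widetilde{\mathcal{J}}_{t}(\mathbfcal{W}_{t+1},\mathbfcal{V}_{t+1},\mathbfcal{U}_{t+1})\le \widetilde{\mathcal{J}}_{t}(\mathbfcal{W}_{t},\mathbfcal{V}_{t},\mathbfcal{U}_{t})$, since the smooth log-cosh terms together with the quadratic surrogates are jointly minimized along the block coordinate directions that Eqs.~(\ref{eqn11})--(\ref{eqn13}) characterize.

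Next, I would apply Lemma~\ref{lemma1} to each non-smooth regularization block. Setting $\mathbf{A}=\mathbf{M}_{t}$ and $\mathbf{B}=\mathbf{M}_{t+1}$ in the lemma yields $\|\mathbf{M}_{t+1}\|_{F}-\frac{\|\mathbf{M}_{t+1}\|_{F}^{2}}{2\|\mathbf{M}_{t}\|_{F}}\le \|\mathbf{M}_{t}\|_{F}-\frac{\|\mathbf{M}_{t}\|_{F}^{2}}{2\|\mathbf{M}_{t}\|_{F}}$. Summing these inequalities over all blocks that constitute $\|\mathbfcal{V}\otimes\mathbfcal{W}\|_{B}$, $\|\mathbfcal{U}\|_{R}$, and the three penalized orthogonality terms, and then combining with the surrogate-decrease inequality above, I obtain
\begin{equation}
\mathcal{J}(\mathbfcal{W}_{t+1},\mathbfcal{V}_{t+1},\mathbfcal{U}_{t+1})\le \mathcal{J}(\mathbfcal{W}_{t},\mathbfcal{V}_{t},\mathbfcal{U}_{t}),
\end{equation}
with equality only at a stationary point. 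Because $\mathcal{J}$ is bounded below (log-cosh is non-negative, the Frobenius-norm terms are non-negative, and the Lagrangian penalties are non-negative), the monotone sequence $\{\mathcal{J}(\mathbfcal{W}_{t},\mathbfcal{V}_{t},\mathbfcal{U}_{t})\}$ converges.

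Finally, I would argue that the limit point is a global minimizer by exploiting that at convergence the reweighting matrices are self-consistent: Eqs.~(\ref{eqn11})--(\ref{eqn13}) become exactly the KKT stationarity conditions of Eq.~(\ref{eqn9}) with the Lagrangian multipliers $\lambda^{l},\lambda_{d},\lambda^{(c)}\gg 0$ driving the orthogonality residuals to zero. Since the log-cosh loss is convex in its argument, the Frobenius norms are convex, and the penalized constraints reduce to a convex feasible set in the limit where the multipliers are large, the aggregated objective in Eq.~(\ref{eqn7}) restricted to the feasible set is convex in each block; thus any stationary point returned by the iterations is a global minimizer. The hardest step will be the last one: justifying that the KKT point attained in the limit is truly global rather than only block-coordinate stationary. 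I would handle this by invoking the convexity of each surrogate in its own block and by noting that the Lagrangian penalty construction in Eq.~(\ref{eqn9}) makes the orthogonality residuals vanish in the limit, so the iteration effectively operates on the Stiefel-type feasible region where the remaining objective is convex and separable across $\mathbfcal{W}$, $\mathbfcal{V}$, and $\mathbfcal{U}$ given the converged reweighting matrices.
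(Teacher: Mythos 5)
Your proposal follows essentially the same route as the paper's own proof: you majorize the non-smooth regularizers and penalized orthogonality terms with quadratic reweighted surrogates (realized in the paper by the block-diagonal matrices $\mathbf{Q}^{B},\mathbf{Q}^{l},\mathbf{Q}_{d},\mathbf{Q}^{(c)},\mathbf{O}^{B},\mathbf{P}^{(R)}$), show the reweighted objective decreases under the updates of Eqs.~(\ref{eqn11})--(\ref{eqn13}), apply Lemma~\ref{lemma1} blockwise and sum to obtain monotone decrease of the objective in Eq.~(\ref{eqn9}), and then invoke convexity of the Log-Cosh and Frobenius-norm terms to conclude global optimality, which mirrors the paper's chain from Eq.~(\ref{proof4}) through Eq.~(\ref{proof11}) and its closing convexity argument. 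Your explicit majorization--minimization framing and the caveat you raise about block-coordinate stationarity are simply a more carefully worded version of the same steps the paper takes.
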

\begin{proof}
\normalsize{According to Step 7 in Algorithm \ref{alg1}, the value of $\mathbf{W}^{i}(s+1)$ is computed from $\mathbfcal{W}(s)$ in the $s$-th iteration by:}
\begin{align}\label{proof1_1}
\scriptstyle \mathbf{W}^{i}(s+1)& \scriptstyle =\mathcal{L}_{lc}(\mathbfcal{W}\otimes\mathbfcal{X}-\mathbf{Z})
+\lambda_{1}Tr  (\mathbf{V}^{i}\mathbf{W}^{i\top}) \mathbf{Q}^{B}(s+1) (\mathbf{V}^{i}\mathbf{W}^{i}) \nonumber \\
\scriptstyle +\lambda^{l}Tr & \scriptstyle (\mathbf{W}^{i}\mathbf{W}^{i\top}-\mathbf{I}^{l^\top}) \mathbf{Q}^{l}(s+1)(\mathbf{W}^{i}\mathbf{W}^{i\top}-\mathbf{I}^{l})  \\
\scriptstyle +\lambda_{d} Tr & \scriptstyle (\mathbf{W}_{j}\mathbf{W}_{j}^{\top}-\mathbf{I}_{d})^\top \mathbf{Q}_{d}(s+1)(\mathbf{W}_{j}\mathbf{W}_{j}^{\top}-\mathbf{I}^{d}) \nonumber \\
\scriptstyle +\lambda^{(c)}Tr & \scriptstyle (\mathbf{W}^{(k)}\mathbf{W}^{(k)\top} - \mathbf{I}^{(c)\top}) \mathbf{Q}^{(c)}(s+1) (\mathbf{W}^{(k)}\mathbf{W}^{(k)\top} - \mathbf{I}^{(c)}) \nonumber
\end{align}
where the operator $Tr$ denotes the trace of a matrix. From Step 9 in Algorithm \ref{alg1}, we obtain:
\begin{align}\label{proof2}
\scriptstyle  \mathbf{V}^{i}(s+1)\scriptstyle  =\mathcal{L}_{lc}(\mathbfcal{V}\otimes\mathbfcal{W}\otimes\mathbfcal{X}-\mathbf{Y}) \scriptstyle  +\lambda_{1}Tr(\mathbf{V}^{i}\mathbf{W}^{i})^\top \mathbf{O}^{B}(s+1) (\mathbf{V}^{i}\mathbf{W}^{i})
\end{align}
From Step 11 in Algorithm \ref{alg1} we also obtain:
\begin{align}\label{proof3}
\scriptstyle \mathbf{U}^{i}(s+1)=\mathcal{L}_{lc}(\mathbfcal{U}\otimes\mathbfcal{W}\otimes\mathbfcal{E}-(\mathbf{A}-\mathbf{Y}))
+\lambda_{2}Tr \mathbf{U}^{i}\mathbf{P}(s+1)\mathbf{U}^{i}
\end{align}
Then, we derive that:
\begin{align}\label{proof4}
&\scriptstyle \mathbfcal{F}_{1}(s+1)+\mathbfcal{F}_{2}(s+1)+\mathbfcal{F}_{3}(s+1) +\lambda_{1}Tr(\mathbf{V}^{i}\mathbf{W}^{i})^\top \mathbf{Q}^{B}(s+1) (\mathbf{V}^{i}\mathbf{W}^{i})  \nonumber \\
&\scriptstyle +\lambda^{l}Tr (\mathbf{W}^{i}\mathbf{W}^{i\top}-\mathbf{I}^{l})^\top \mathbf{Q}^{l}(s+1)(\mathbf{W}^{i}\mathbf{W}^{i\top}-\mathbf{I}^{l}) \nonumber \\
&\scriptstyle +\lambda_{d} Tr (\mathbf{W}_{j}\mathbf{W}_{j}^{\top}-\mathbf{I}_{d})^\top \mathbf{Q}_{d}(s+1)(\mathbf{W}_{j}\mathbf{W}_{j}^{\top}-\mathbf{I}^{d}) \nonumber \\
&\scriptstyle +\lambda^{(c)}Tr(\mathbf{W}^{(k)}\mathbf{W}^{(k)\top} - \mathbf{I}^{(c)\top} \mathbf{Q}^{(c)}(s+1) (\mathbf{W}^{(k)}\mathbf{W}^{(k)\top} - \mathbf{I}^{(c)}) \nonumber \\
&\scriptstyle +\lambda_{1}Tr(\mathbf{V}^{i}\mathbf{W}^{i})^\top \mathbf{O}^{B}(s+1) (\mathbf{V}^{i}\mathbf{W}^{i})
+\lambda_{2}Tr\mathbf{U}^{i}\mathbf{P}(s+1)\mathbf{U}^{i} \nonumber \\
& \scriptstyle \leq \mathbfcal{F}_{1}(s)+\mathbfcal{F}_{2}(s)+\mathbfcal{F}_{3}(s) +\lambda_{1}Tr(\mathbf{V}^{i}\mathbf{W}^{i})^\top \mathbf{Q}^{B}(s) (\mathbf{V}^{i}\mathbf{W}^{i})  \\
&\scriptstyle +\lambda^{l}Tr (\mathbf{W}^{i}\mathbf{W}^{i\top}-\mathbf{I}^{l})^\top \mathbf{Q}^{l}(s)(\mathbf{W}^{i}\mathbf{W}^{i\top}-\mathbf{I}^{l}) \nonumber \\
&\scriptstyle +\lambda_{d} Tr (\mathbf{W}_{j}\mathbf{W}_{j}^{\top}-\mathbf{I}_{d})^\top \mathbf{Q}_{d}(s)(\mathbf{W}_{j}\mathbf{W}_{j}^{\top}-\mathbf{I}^{d}) \nonumber \\
&\scriptstyle +\lambda^{(c)}Tr(\mathbf{W}^{(k)}\mathbf{W}^{(k)\top} - \mathbf{I}^{(c)})^\top \mathbf{Q}^{(c)}(s) (\mathbf{W}^{(k)}\mathbf{W}^{(k)\top} - \mathbf{I}^{(c)}) \nonumber \\
&\scriptstyle +\lambda_{1}Tr(\mathbf{V}^{i}\mathbf{W}^{i})^\top \mathbf{O}^{B}(s) (\mathbf{V}^{i}\mathbf{W}^{i})
+\lambda_{2}Tr\mathbf{U}^{i}\mathbf{P}(s)\mathbf{U}^{i} \nonumber
\end{align}
where:
\begin{align}
&\mathbfcal{F}_{1}(s)=\mathcal{L}_{lc}(\mathbfcal{W}(s)\otimes\mathbfcal{X}-\mathbf{Z}) \nonumber \\
&\mathbfcal{F}_{2}(s)= \mathcal{L}_{lc}(\mathbfcal{V}(s)\otimes\mathbfcal{W}(s)\otimes\mathbfcal{X}-\mathbf{Y}) \nonumber \\
&\mathbfcal{F}_{3}(s) = \mathcal{L}_{lc}( \mathbfcal{U}(s)\otimes\mathbfcal{W}(s)\otimes\mathbfcal{E} -(\mathbf{A}-\mathbf{Y})) \nonumber
\end{align}
After substituting $\mathbf{Q}^{B},\mathbf{Q}^{l},\mathbf{Q}_{d}$, $\mathbf{Q}^{(c)}$, $\mathbf{O}^{B}$ and $\mathbf{P}^{R}$ in Eq. (\ref{proof4}), we obtain:
\begin{align}\label{proof5}
&\scriptstyle \mathbfcal{F}_{1}(s+1)+\mathbfcal{F}_{2}(s+1)+\mathbfcal{F}_{3}(s+1) +\lambda_{2}\sum_{k=1}^{c}\frac{\Vert\mathbf{U}^{(k)}(s+1)\Vert_{F}^{2}}{2\Vert\mathbf{U}^{(k)}(s)\Vert_{F}} \nonumber \\
&\scriptstyle +\lambda_{1}\sum_{i=j}^{m}\sum_{k=1}^{c}\frac{\Vert\mathbf{V}_{j}^{(k)}(s)\mathbf{W}_{j}^{(k)}(s+1)\Vert_{F}^{2}}{2\Vert\mathbf{V}_{j}^{(k)}(s)\mathbf{W}_{j}^{(k)}(s)\Vert_{F}}  \nonumber \\
&\scriptstyle +\lambda^{l}\sum_{i=1}^{l}\frac{\Vert \mathbf{W}^{i}(s+1)(\mathbf{W}(s+1)^{i})^{\top}-\mathbf{I}^{l}\Vert_{F}^{2}}{\Vert 2 \mathbf{W}^{i}(s)(\mathbf{W}^{i}(s))^{\top}-\mathbf{I}^{l}\Vert_{F}} \nonumber \\
&\scriptstyle +\lambda_{d}\sum_{j=1}^{d}\frac{\Vert \mathbf{W}_{j}(s+1)(\mathbf{W}_{j}(s+1))^{\top}-\mathbf{I}_{d}\Vert_{F}^{2}}{\Vert 2 \mathbf{W}_{j}(s)(\mathbf{W}_{j}(s))^{\top}-\mathbf{I}_{d}\Vert_{F}} \nonumber \\
&\scriptstyle +\lambda^{(c)}\sum_{k=1}^{c}\frac{\Vert \mathbf{W}^{(k)}(s+1)(\mathbf{W}^{(k)}(s+1))^{\top}-\mathbf{I}^{(c)}\Vert_{F}^{2}}{\Vert  2\mathbf{W}^{(k)}(s)(\mathbf{W}^{(k)}(s))^{\top}-\mathbf{I}^{(c)}\Vert_{F}}  \nonumber \\
& \scriptstyle \leq \mathbfcal{F}_{1}(s)+\mathbfcal{F}_{2}(s)+\mathbfcal{F}_{3}(s)+\lambda_{2}\sum_{k=1}^{c}\frac{\Vert\mathbf{U}^{(k)}(s)\Vert_{F}^{2}}{2\Vert\mathbf{U}^{(k)}(s)\Vert_{F}} \\
&\scriptstyle +\lambda_{1}\sum_{i=j}^{m}\sum_{k=1}^{c}\frac{\Vert\mathbf{V}_{j}^{(k)}(s)\mathbf{W}_{j}^{(k)}(s)\Vert_{F}^{2}}{2\Vert\mathbf{V}_{j}^{(k)}(s)\mathbf{W}_{j}^{(k)}(s)\Vert_{F}}  \nonumber \\
&\scriptstyle +\lambda^{l}\sum_{i=1}^{l}\frac{\Vert \mathbf{W}^{i}(s)(\mathbf{W}(s)^{i})^{\top}-\mathbf{I}^{l}\Vert_{F}^{2}}{\Vert 2 \mathbf{W}^{i}(s)(\mathbf{W}^{i}(s))^{\top}-\mathbf{I}^{l}\Vert_{F}} \nonumber \\
&\scriptstyle +\lambda_{d}\sum_{j=1}^{d}\frac{\Vert \mathbf{W}_{j}(s)(\mathbf{W}_{j}(s))^{\top}-\mathbf{I}_{d}\Vert_{F}^{2}}{\Vert 2 \mathbf{W}_{j}(s)(\mathbf{W}_{j}(s))^{\top}-\mathbf{I}_{d}\Vert_{F}} \nonumber \\
&\scriptstyle +\lambda^{(c)}\sum_{k=1}^{c}\frac{\Vert \mathbf{W}^{(k)}(s)(\mathbf{W}^{(k)}(s))^{\top}-\mathbf{I}^{(c)}\Vert_{F}^{2}}{\Vert  2\mathbf{W}^{(k)}(s)(\mathbf{W}^{(k)}(s))^{\top}-\mathbf{I}^{(c)}\Vert_{F}}  \nonumber
\end{align}
Using Lemma \ref{lemma1}, we obtain the following  inequalities in Eqs. (\ref{proof6}-\ref{proof10}).
From $\mathbf{Q}^{B}$ and $\mathbf{O}^{B}$, we obtain:
\begin{align}\label{proof6}
&\scriptstyle \sum_{i=1}^{m}\sum_{k=1}^{c}\Big( \Vert\mathbf{V}_{j}^{(k)}(s)\mathbf{W}_{j}^{(k)}(s+1)\Vert_{F} - \frac{\Vert\mathbf{V}_{j}^{(k)}(s)\mathbf{W}_{j}^{(k)}(s+1)\Vert_{F}^{2}}{2\Vert\mathbf{V}_{j}^{(k)}(s)\mathbf{W}_{j}^{(k)}(s)\Vert_{F}} \Big) \nonumber \\
&\scriptstyle \leq
\sum_{i=1}^{m}\sum_{k=1}^{c}\Big( \Vert\mathbf{V}_{j}^{(k)}(s)\mathbf{W}_{j}^{(k)}(s)\Vert_{F} - \frac{\Vert\mathbf{V}_{j}^{(k)}(s)\mathbf{W}_{j}^{(k)}(s)\Vert_{F}^{2}}{2\Vert\mathbf{V}_{j}^{(k)}(s)\mathbf{W}_{j}^{(k)}(s)\Vert_{F}} \Big)
\end{align}
From $\mathbf{Q}^{l}$, we obtain:
\begin{align}\label{proof7}
&\scriptstyle \sum_{i=1}^{m}\Big( \Vert \mathbf{W}^{i}(s+1)\mathbf{W}^{i\top}(s+1)-\mathbf{I}^{l}\Vert_{F} - \frac{\Vert \mathbf{W}^{i}(s+1)\mathbf{W}^{i\top}(s+1)-\mathbf{I}^{l}\Vert_{F}^{2}}{2\Vert\mathbf{W}^{i}(s)\mathbf{W}^{i\top}(s)-\mathbf{I}^{l}\Vert_{F}}  \Big) \nonumber \\
&\scriptstyle \leq
\sum_{i=1}^{m}\Big(\Vert \mathbf{W}^{i}(s)\mathbf{W}^{i\top}(s)-\mathbf{I}^{l}\Vert_{F} - \frac{\Vert \mathbf{W}^{i}(s)\mathbf{W}^{i\top}(s)-\mathbf{I}^{l}\Vert_{F}^{2}}{2\Vert\mathbf{W}^{i}(s)(\mathbf{W}^{i\top}(s))-\mathbf{I}^{l}\Vert_{F}} \Big)
\end{align}
From $\mathbf{Q}_{d}$, we obtain:
\begin{align}\label{proof8}
&\scriptstyle \sum_{j=1}^{d}\Big( \Vert \mathbf{W}_{j}(s+1)(\mathbf{W}_{j}(s+1))^{\top}-\mathbf{I}_{d}\Vert_{F} - \frac{\Vert \mathbf{W}_{j}(s+1)(\mathbf{W}_{j}(s+1))^{\top}-\mathbf{I}_{d}\Vert_{F}^{2}}{2\Vert\mathbf{W}_{j}(s)(\mathbf{W}_{j}(s))^{\top}-\mathbf{I}_{d}\Vert_{F}}  \Big) \nonumber \\
&\scriptstyle \leq
\sum_{j=1}^{d}\Big(\Vert \mathbf{W}_{j}(s)(\mathbf{W}_{j}(s))^{\top}-\mathbf{I}_{d}\Vert_{F} - \frac{\Vert \mathbf{W}_{j}(s)(\mathbf{W}_{j}(s))^{\top}-\mathbf{I}_{d}\Vert_{F}^{2}}{2\Vert\mathbf{W}_{j}(s)(\mathbf{W}_{j}(s))^{\top}-\mathbf{I}_{d}\Vert_{F}} \Big)
\end{align}
From $\mathbf{Q}^{(c)}$, we obtain:
\begin{align}\label{proof9}
&\scriptstyle  \sum_{k=1}^{c}\Big( \Vert \mathbf{W}^{(k)}(s+1)(\mathbf{W}^{(k)}(s+1))^{\top}-\mathbf{I}^{(c)}\Vert_{F} -\\
&\scriptstyle  \quad \quad \quad \quad \quad \frac{\Vert \mathbf{W}^{(k)}(s+1)(\mathbf{W}^{(k)}(s+1))^{\top}-\mathbf{I}^{(c)}\Vert_{F}^{2}}{2\Vert\mathbf{W}^{(k)}(s)(\mathbf{W}^{(k)}(s))^{\top}-\mathbf{I}^{(c)}\Vert_{F}}  \Big) \leq \nonumber \\
&\scriptstyle  \sum_{k=1}^{c}\Big(\Vert \mathbf{W}^{(k)}(s)(\mathbf{W}^{(k)}(s))^{\top}-\mathbf{I}^{(c)}\Vert_{F}
-\frac{\Vert \mathbf{W}^{(k)}(s)(\mathbf{W}^{(k)}(s))^{\top}-\mathbf{I}^{(c)}\Vert_{F}^{2}}{2\Vert\mathbf{W}^{(k)}(s)(\mathbf{W}^{(k)}(s))^{\top}-\mathbf{I}^{(c)}\Vert_{F}} \Big) \nonumber
\end{align}
From $\mathbf{P}^{(R)}$, we obtain:
\begin{align}\label{proof10}
&\scriptstyle \sum_{k=1}^{c}\Big( \Vert \mathbf{U}^{(k)}(s+1)\Vert_{F} - \frac{\Vert \mathbf{U}^{(k)}(s+1) \Vert_{F}^{2}}{2\mathbf{U}^{(k)}(s)\Vert_{F}}\Big)\nonumber \\
& \scriptstyle \leq  \sum_{k=1}^{c}\Big( \Vert \mathbf{U}^{(k)}(s)\Vert_{F} - \frac{\Vert \mathbf{U}^{(k)}(s) \Vert_{F}^{2}}{2\mathbf{U}^{(k)}(s)\Vert_{F}}  \Big)
\end{align}

Then, adding Eqs. (\ref{proof6}-\ref{proof10}) to Eq. (\ref{proof5}) on both sides results in the following inequality:
\begin{align}\label{proof11}
&\scriptstyle  \mathbfcal{F}_{1}(s+1)+\mathbfcal{F}_{2}(s+1)+\mathbfcal{F}_{3}(s+1) +\lambda_{2}\sum_{k=1}^{c}\Vert\mathbf{U}^{(k)}(s+1)\Vert_{F} \nonumber \\
&\scriptstyle  +\lambda_{1}\sum_{i=j}^{m}\sum_{k=1}^{c}\Vert\mathbf{V}_{j}^{(k)}(s)\mathbf{W}_{j}^{(k)}(s+1)\Vert_{F} \nonumber \\
&\scriptstyle  +\lambda^{l}\sum_{i=1}^{l}\Vert \mathbf{W}^{i}(s+1)\mathbf{W}^{i\top}(s+1)-\mathbf{I}^{l}\Vert_{F}\nonumber \\
&\scriptstyle  +\lambda_{d}\sum_{j=1}^{d}\Vert \mathbf{W}_{j}(s+1)\mathbf{W}_{j}^{\top}(s+1)-\mathbf{I}_{d}\Vert_{F}\nonumber \\
&\scriptstyle  +\lambda^{(c)}\sum_{k=1}^{c}\Vert \mathbf{W}^{(k)}(s+1)\mathbf{W}^{(k)\top}(s+1)-\mathbf{I}^{(c)}\Vert_{F} \nonumber \\
&\scriptstyle   \leq \mathbfcal{F}_{1}(s)+\mathbfcal{F}_{2}(s)+\mathbfcal{F}_{3}(s)+\lambda_{2}\sum_{k=1}^{c}\Vert\mathbf{U}^{(k)}(s)\Vert_{F} \\
&\scriptstyle  +\lambda_{1}\sum_{i=j}^{m}\sum_{k=1}^{c}\Vert\mathbf{V}_{j}^{(k)}(s)\mathbf{W}_{j}^{(k)}(s)\Vert_{F} \nonumber \\
&\scriptstyle  +\lambda^{l}\sum_{i=1}^{l}\Vert \mathbf{W}^{i}(s)\mathbf{W}^{i\top}(s)-\mathbf{I}^{l}\Vert_{F}  +\lambda_{d}\sum_{j=1}^{d}\Vert \mathbf{W}_{j}(s)\mathbf{W}_{j}^{\top}(s)-\mathbf{I}_{d}\Vert_{F} \nonumber \\
&\scriptstyle  +\lambda^{(c)}\sum_{k=1}^{c}\Vert \mathbf{W}^{(k)}(s)\mathbf{W}^{(k)\top}(s)-\mathbf{I}^{(c)}\Vert_{F} \nonumber
\end{align}
This inequality in Eq. (\ref{proof11}) proves that the objective value is decreased in each iteration.
The Log-Cosh losses in Eq. (\ref{eqn8}) are convex (\cite{gangal2007performance}) 
and the Frobenius norms in Eq. (\ref{eqn8}) are also convex (\cite{negahban2012restricted}). 
Thus, as a sum of convex functions, the objective function in Eq. (\ref{eqn8}) is also convex. 
Therefore, Algorithm \ref{thm1} is guaranteed to converge to the global optimal solution to the formulated constrained regularized optimization problem in Eq. (\ref{eqn7}).
\end{proof}

\noindent \textbf{Time Complexity.} As the formulated optimization problem in Eq. (\ref{eqn7}) is convex, Algorithm \ref{alg1} converges fast (e.g., within tens of iterations only).
In each iteration of our algorithm, computing Steps 3, 4, 5, 6, 8 and 10 is trivial. Steps 7, 9 and 11 can be computed by solving a system of linear equations with quadratic complexity.

\section{Experiments}\label{sec:EXPT}
In this section, we first discuss on the experimental setup and implementation details of our approach.
Then we analyze the experimental results obtained by our method and compare it with the previous state-of-the-art methods in multiple ground navigation scenarios in real-world off-road environments.


\begin{figure}
\centering
\includegraphics[width=0.485\textwidth]{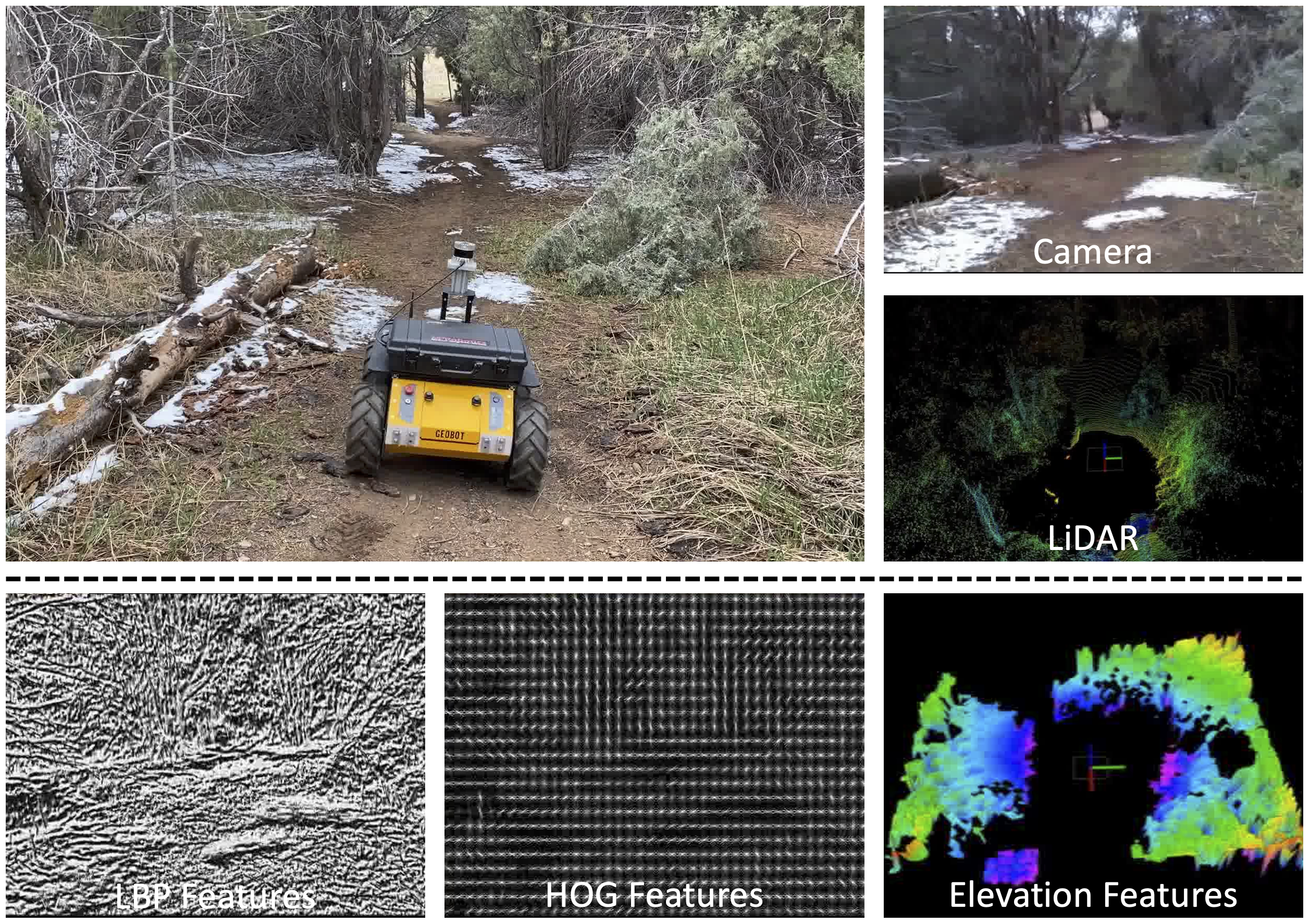}
\caption{
Multi-modal features used in our experimentation to characterize robot's traversing terrain.
}\label{fig:features}
\end{figure}

\subsection{Experimental Setups}
We extensively evaluate our approach using a Clearpath Husky unmanned ground vehicle robot to navigate over a variety of unstructured off-road terrains. 
The robot is equipped with multiple exteroceptive and proprioceptive sensors.
Exteroceptive sensors include an Intel Realsense D435 color camera and an Ouster OS1-64 LIDAR to observe the terrain. 
Proprioceptive sensors including wheel encoders and Microstrain 3DM-GX5-25 Internal Measurement Unit (IMU) are used to measure the robot's internal states while navigating over these terrains. 
All these terrain observations are obtained from the robot and are linearly interpolated to get observations constantly at 15 Hz. 

\begin{figure*}
\centering
\includegraphics[width=0.98\textwidth]{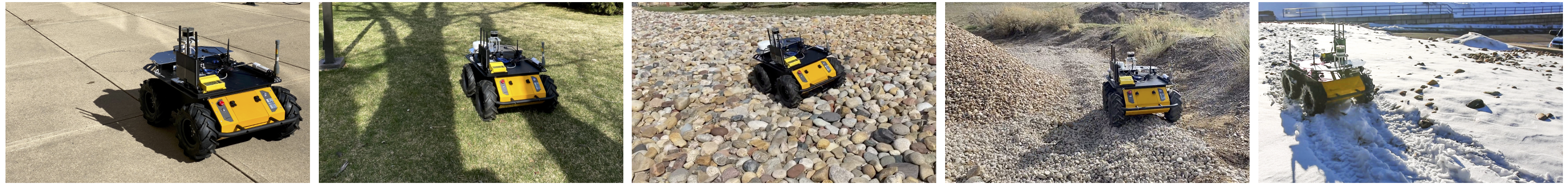}
\caption{Various terrain tracks used for for individual types of terrains. From left: concrete, grass, gravels, large-rocks and snow terrain.
}\label{fig:figure_terrain_1_1}
\end{figure*}

For representing unstructured terrains, we implement multiple visual features extracted from color images to depict the characteristics ot the unstructured terrain. 
These features include Histogram of Oriented Gradients (HOG) (\cite{dalal2005histograms}) to depict the shape of the terrain and Local Binary Patterns (LBP) (\cite{ahonen2006face})  to detail the texture of the terrain. 
To show the various terrain characteristics such as the slope, step height, and terrain normals, we also compute an elevation map (\cite{fankhauser2014robot}) of the terrain using LIDAR data to form a grid-wise robot centric elevation map. 
Apart from these features, we also concatenate sensory readings from the Inertial Measurement Unit (IMU) into a vector of features. 
The various features used in our experiments are visualized in Figure \ref{fig:features}.

In addition, the expected robot behaviors are recorded as demonstrations from an expert human operator 
The actual robot navigational behaviors have been recorded using SLAM techniques (\cite{legoloam2018}).  We use a sequence of five frames (i.e., $c=15$), and the hyperparameters $\lambda_{1}=1$ and $\lambda_{2}=0.1$ for all of our experiments.

We compare our approach against three state-of-the-art approaches. These three approaches can be broadly classified into two categories. The first category is model-based approaches which includes the methods of Model Predictive Path Integral (MPPI)  control (\cite{williams2016aggressive}) and Non-Linear Control Optimization tool (NLOPT) (\cite{nlopt}). The second category of methods includes learning-based approach of terrain representation and apprenticeship learning (TRAL) (\cite{siva2019robot}). 

All the aforementioned methods are quantitatively evaluated against our over various terrains. We define the following evaluation metrics to illustrate the observations:
\begin{itemize}
    \item \emph{Failure Rate}: This metric is defined as the number of times the robot fails to complete the navigation task across a set of experimental trails. A failure is when the robot gets stuck in the terrain or in an obstacle  or flips over in the terrain. Lower values of failure rate indicate better performance and is preferred. 
    \item \emph{Traversal Time}: This metric is defined as the average time taken to complete the navigational task in all the successive runs over the given terrain. Generally, smaller values of traversal time indicate better performance and may be preferred.   
    \item \emph{Inconsistency}: This metrics is defined as the error between the robot's expected navigational  behavior and its actual behavior in terms of robot poses (the linear and angular position). A lower value of inconsistency metric indicate better performance and is preferred.
    \item \emph{Jerkiness}: This metric is defined as the average sum of the acceleration derivatives along all axes, with lower values indicating better performance. Jerkiness indicates how smooth a robot can traverse over a terrain. This is a useful metric for state estimation and SLAM methods that may assume smooth robot motions.
\end{itemize}

\begin{figure*}
\centering
\includegraphics[width=0.98\textwidth]{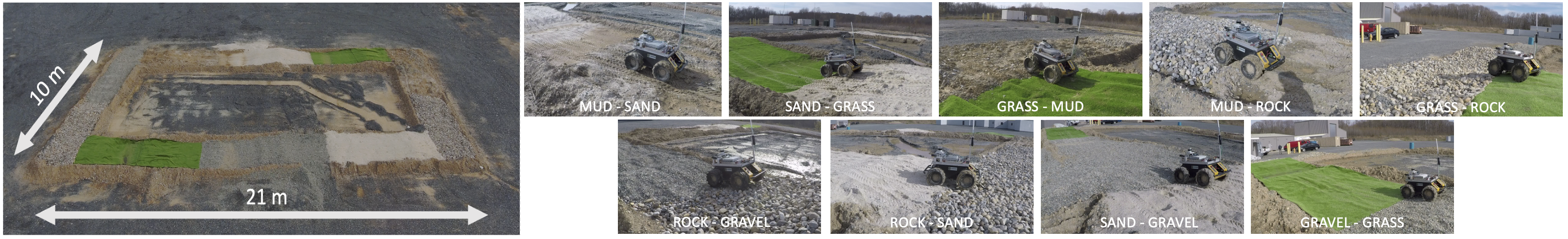}
\caption{
Off-road terrain circuit (left) consisting of multiple terrains of grass, gravels, large rocks, mud and sand. These tracks also show a combination of transitions between these individual terrains (right) and also have a varying slope of $0^{\circ}$-$30^{\circ}$ throughout the track.
}\label{fig:figure_terrain_1_2}
\end{figure*}

\subsection{Terrain Adaptation for Ground Robot Navigation over Unstructured Terrains}
In this set of experiments, we evaluate our proposed approach against its ability to generate terrain aware navigational behaviors. 
The ability to generate terrain aware behaviors are required mainly when 
robots are i) navigating over individual terrain tracks and ii) navigating over complex terrains. 
Therefore, we test our approach along with comparison methods in these two kinds of scenarios. More discussion on each of these experimental scenarios are as follows:

\begin{table*}
\centering
\caption{Quantitative results based on ten runs for scenarios when the robot traverses over \textbf{individual types of unstructured terrain} shown in Fig. \ref{fig:figure_terrain_1_1}.
Successful runs (with no failures) are used to calculate the metrics of traversal time, inconsistency and jerkiness.
Our approach is compared with the methods of MPPI (\cite{williams2016aggressive}), NLOPT (\cite{nlopt}) and TRAL (\cite{siva2019robot}).
}
\label{tab:RTT1_1}
\tabcolsep=0.07cm
\begin{tabular}{ | c| c| c| c| c|| c| c| c| c|| c| c| c| c|| c| c| c| c|}
\hline
\multicolumn{1}{|c|}{} & \multicolumn{4}{c||}{\small{Failure Rate (/10)}} & \multicolumn{4}{c||}{\small{Traversal Time (s)}} & \multicolumn{4}{c||}{\small{Inconsistency}}& \multicolumn{4}{c|}{\small{Jerkiness (m/s$^3$)}} \\
\hline \small{Terrain} & \small{MPPI} & \small{NLOPT} & \small{TRAL} & \small{\textbf{Ours}} & \small{MPPI} & \small{NLOPT} & \small{TRAL} &  \small{\textbf{Ours}} & \small{MPPI} & \small{NLOPT} & \small{TRAL} & \small{\textbf{Ours}} & \small{MPPI} & \small{NLOPT} & \small{TRAL} & \small{\textbf{Ours}}\\ \hline
\small{Concrete} & \small{0} & \small{0} & \small{0} & \small{0} &
\small{12.82} &  \small{17.82} & \small{16.04} & \small{12.49} &
\small{0.65} & \small{0.79} & \small{0.69} & \small{0.56} &
\small{5.13} & \small{4.98} & \small{5.19} & \small{4.35}\\

\small{Grass} & \small{0} & \small{0} & \small{0} & \small{0} &
\small{13.45}& \small{18.02}  & \small{16.39}  & \small{13.32} &
\small{0.61} & \small{0.90} & \small{0.88} & \small{0.62} &
\small{7.82} & \small{8.52} & \small{8.32} & \small{6.98} \\

\small{Gravels} & \small{0} & \small{0} & \small{0} & \small{0} &
\small{13.96}  & \small{16.39} & \small{15.32} & \small{14.84} &
\small{1.21} & \small{1.67} & \small{1.72} & \small{1.03} &
\small{16.11} & \small{16.66} & \small{16.43} & \small{15.09} \\

\small{L.Rock} & \small{0} & \small{1} & \small{0} & \small{0} &
\small{13.45}  & \small{17.87} & \small{16.71} & \small{16.31} &
\small{3.97} & \small{5.64} & \small{3.21} & \small{2.91} &
\small{13.75} & \small{17.61} & \small{14.85} & \small{12.32} \\

\small{Snow} & \small{0} & \small{0} & \small{0} & \small{0} &
\small{14.22}  & \small{18.09} & \small{17.49} & \small{14.08} &
\small{7.18} & \small{9.93} & \small{8.12} & \small{6.64} &
\small{5.12} & \small{5.76} & \small{5.49} & \small{5.42}\\

\hline
\end{tabular}
\end{table*}

\subsubsection{Experimental Analysis on Individual Terrain Tracks}\label{ref:train_start}
In this experimental scenario, we evaluate our proposed approach on various individual types of terrains. That is each of the testing tracks used in this scenario is made of one type of terrain. The length of the tracks are kept approximately ten meters long.
We use five different terrains: concrete, grass, gravels, large-rocks and snow terrains, and are illustrated in Figure \ref{fig:figure_terrain_1_1}. Each of these five terrains has distinctive characteristics and are frequently observed in real-world environments. 
Our approach is trained on driving data collected while the robot is manually being controlled by an expert human operator to traverse over each of the terrains. 
The learned model is then deployed on the robot to generate navigational behaviors as the robot autonomously navigates over the terrain. 
Specifically, our approach is used as a local controller which operates under a local planner. 
Then the evaluation metrics for each of the methods are computed across ten trials on each type of individual terrain tracks.

The quantitative results achieved by our approach and its comparison with other methods are provided in Table \ref{tab:RTT1_1}. 
We observe that nearly all the approaches successfully navigate over all the individual types of terrain with an exception of the NLOPT approach that fails in a single test run on large rocks terrain. 
The presented traversal time is computed by averaging the traversal time across all successful runs, i.e., it excludes the failed trials captured by the failure rate metric.
In terms of traversal time, our approach achieves better traversal time in the terrains of  concrete, grass and snow, followed by the MPPI approach. 
In the terrains of gravels and large-rocks we observe that our approach has a slower traversal time as compared to the MPPI approach and performs the second best. 
Both the NLOPT and TRAL approach both have a slower traversal time over all the terrains with the NLOPT approach performing the slowest throughout all the terrains. 
Also in model-based approaches of MPPI and NLOPT we do not observe the terrain aware nature of the approach, that is there is no adaptation in navigational behaviors as the traversal time remains nearly the same. 
On the other hand, learning-based approaches do show variation in traversal over different terrains. 
Specifically, our approach shows a faster traversal time in plain terrains of concrete, grass and snow, and a slower traversal time on both the gravels and large rocks terrain.

Table \ref{tab:RTT1_1} also presents the inconsistency and the jerkiness metrics. We observe that our approach achieves the best inconsistency throughout all the terrains except the grass terrain, followed by both the MPPI and TRAL approach. 
The MPPI approach has a best inconsistency value in grass terrain. The TRAL approach obtains a better performance to MPPI approach in the large-rocks terrains but otherwise has a poorer performance. 
The NLOPT approach obtains a highest inconsistency metric throughout all the terrains and thus has the least performance in terms of the inconsistency metric.
In general, we observe that learning based methods obtain a better performance over model-based approaches. 
While comparing against the jerkiness metric, we observe that our approach obtains the least value of jerkiness in all terrains but snow. 
The MPPI approach obtains the best performance in snow terrain and trails our approach in all the remaining terrains. 
Both the NLOPT and TRAL approach obtain highest value of jerkiness in specific terrains and perform poorly as compared to both MPPI and our approach.
We observe that both the MPPI approach and our approach perform best in the individual type of terrains across all the metrics with our approach performing the best in most of the scenarios.

\begin{table*}
\centering
\caption{Quantitative results based on ten runs for scenarios when the robot traverses over \textbf{complex terrain tracks} shown in Fig. \ref{fig:figure_terrain_1_2}.
Successful runs (with no failures) are used to calculate the metrics of traversal time, inconsistency and jerkiness.
Our approach is compared with classical control based approaches of MPPI (\cite{williams2016aggressive}), NLOPT (\cite{nlopt}) and TRAL (\cite{siva2019robot}).
}
\label{tab:RTT1_2}
\tabcolsep=0.045cm
\begin{tabular}{ | c| c| c| c| c|| c| c| c| c|| c| c| c| c|| c| c| c| c|}
\hline
\multicolumn{1}{|c|}{} & \multicolumn{4}{c||}{\small{Failure Rate (/10)}} & \multicolumn{4}{c||}{\small{Traversal Time (s)}} & \multicolumn{4}{c||}{\small{Inconsistency}}& \multicolumn{4}{c|}{\small{Jerkiness (m/s$^3$)}} \\
\hline \small{Terrain} & \small{MPPI} & \small{NLOPT} & \small{TRAL} & \small{\textbf{Ours}} & \small{MPPI} & \small{NLOPT} & \small{TRAL} &  \small{\textbf{Ours}} & \small{MPPI} & \small{NLOPT} & \small{TRAL} & \small{\textbf{Ours}} & \small{MPPI} & \small{NLOPT} & \small{TRAL} & \small{\textbf{Ours}}\\ \hline
\small{Mud-Sand} & \small{0} & \small{0} & \small{0} & \small{0} &
\small{26.61} & \small{31.11} & \small{41.08} & \small{36.17} & 
\small{- } & \small{ - } & \small{ - } & \small{ - } & 
\small{ - } & \small{ - } & \small{ - } & \small{ -} \\

\small{Sand-Grass} & \small{ 0 } & \small{ 0 } & \small{ 0 } & \small{ 0 } & 
\small{ 23.69 } & \small{ 31.78 } & \small{ 36.57 } & \small{ 28.79 } & 
\small{ - } & \small{ - } & \small{ - } & \small{ - } & 
\small{ - } & \small{ - } & \small{ - } & \small{ - }\\

\small{Grass-Mud}  & \small{ 0 } & \small{ 0 } & \small{ 0 } & \small{ 0 } & \small{
18.83 } & \small{ 20.69 } & \small{ 37.35 } & \small{ 26.74 } & \small{
- } & \small{ - } & \small{ - } & \small{ - } & \small{
- } & \small{ - } & \small{ - } & \small{ - }\\

\small{Mud-Rock} & \small{ 0 } & \small{ 0 } & \small{ 0 } & \small{ 0 } & \small{
19.31 } & \small{ 21.86 } & \small{ 39.88 } & \small{ 33.32 } & \small{
- } & \small{ - } & \small{ - } & \small{ - } & \small{
- } & \small{ - } & \small{ - } & \small{ - }\\

\small{Rock-Sand} & \small{ 0 } & \small{ 0 } & \small{ 0 } & \small{ 0 } & 
\small{ 20.44 } & \small{ 22.67 } & \small{ 36.08 } & \small{ 36.59 } & 
\small{ - } & \small{ - } & \small{ - } & \small{ - } & 
\small{ - } & \small{ - } & \small{ - } & \small{ - }\\

\small{Sand-Gravel} & \small{ 0 } & \small{ 0 } & \small{ 0 } & \small{ 0 } & 
\small{ 20.93 } & \small{ 23.98 } & \small{ 41.9 } & \small{ 43.02 } & 
\small{ - } & \small{ - } & \small{ - } & \small{ - } & 
\small{ - } & \small{ - } & \small{ - } & \small{ - }\\

\small{Gravel-Grass} & \small{ 0 } & \small{ 0 } & \small{ 0 } & \small{ 0 } & 
\small{ 22.38 } & \small{ 26.04 } & \small{ 39.17 } & \small{ 29.07 } & 
\small{ - } & \small{ - } & \small{ - } & \small{ - } & 
\small{ - } & \small{ - } & \small{ - } & \small{ - }\\

\small{Grass-Rock} & \small{ 0 } & \small{ 0 } & \small{ 0 } & \small{ 0 } & 
\small{ 19.81 } & \small{ 26.55 } & \small{ 36.15 } & \small{ 35.82 } & 
\small{ - } & \small{ - } & \small{ - } & \small{ - } & 
\small{ - } & \small{ - } & \small{ - } & \small{ - }\\

\small{Rock-Gravel} & \small{ 0 } & \small{ 0 } & \small{ 0 } & \small{ 0 } & 
\small{17.62 } & \small{ 27.03 } & \small{ 34.92 } & \small{ 42.41 } & 
\small{ - } & \small{ - } & \small{ - } & \small{ - } & 
\small{ - } & \small{ - } & \small{ - } & \small{ - } \\

\hline
\end{tabular}
\end{table*}

\begin{figure*}
\centering
\includegraphics[width=0.98\textwidth]{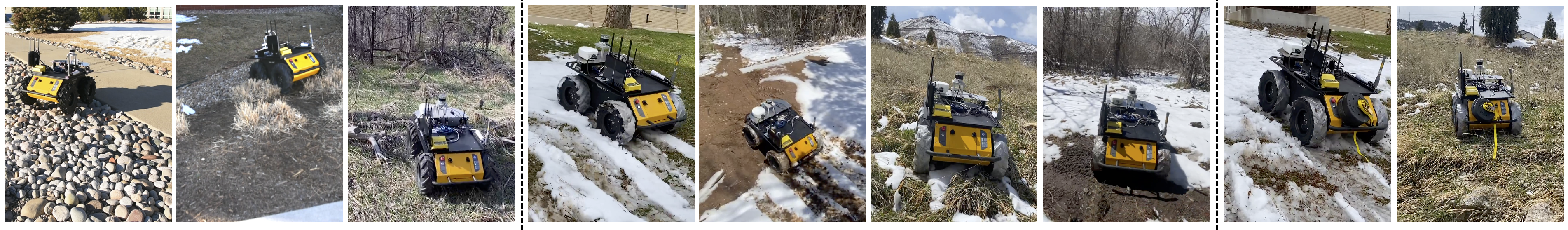}
\caption{Various terrain tracks used to evaluate the self-reflection capability of the robot. From left: mixed terrain I (MT-I), mixed terrain II (MT-II), forest, reduced-traction snow-grass (RT-Snow-Grass), reduced-traction mud-snow (RT-Snow-Mud), reduced-traction hill (RT-Hill), reduced-traction forest (RT-Snow-Forest), reduced-traction payload snow-grass (RT-P-Snow-Grass) and reduced-traction payload hill terrain  (RT-P-Hill).
}
\label{fig:figure_terrain_2}
\end{figure*}

\begin{table*}
\centering
\caption{Quantitative results based on ten runs for scenarios shown in Fig. \ref{fig:figure_terrain_2}
used to evaluate the self-reflection capability of the robot.
Successful runs (with no failures) are used to calculate the metrics of traversal time, inconsistency and jerkiness.
Our approach is compared with classical control based approaches of MPPI (\cite{williams2016aggressive}), NLOPT (\cite{nlopt}) and TRAL (\cite{siva2019robot}).}
\label{tab:RTT2_1}
\tabcolsep=0.01cm
\begin{tabular}{ | c| c| c| c| c|| c| c| c| c|| c| c| c| c|| c| c| c| c|}
\hline
\multicolumn{1}{|c|}{} & \multicolumn{4}{c||}{\small{Failure Rate (/10)}} & \multicolumn{4}{c||}{\small{Traversal Time (s)}} & \multicolumn{4}{c||}{\small{Inconsistency}}& \multicolumn{4}{c|}{\small{Jerkiness (m/s$^3$)}} \\ 
\hline \small{Terrain} & \small{MPPI} & \small{NLOPT} & \small{TRAL} & \small{\textbf{Ours}} & \small{MPPI} & \small{NLOPT} & \small{TRAL} &  \small{\textbf{Ours}} & \small{MPPI} & \small{NLOPT} & \small{TRAL} & \small{\textbf{Ours}} & \small{MPPI} & \small{NLOPT} & \small{TRAL} & \small{\textbf{Ours}}\\ \hline

\small{MT-I} & \small{ 0 } & \small{ 0 } & \small{ 0 } & \small{ 0 } & 
\small{ 14.51 } & \small{  17.68 } & \small{ 16.84 } & \small{ 13.74 } &
\small{ 0.65 } & \small{ 0.79 } & \small{ 0.69 } & \small{ 0.56 } & 
\small{ 5.68 } & \small{ 5.57 } & \small{ 5.13 } & \small{ 4.95 }\\

\small{MT-II} & \small{ 1 } & \small{ 2 } & \small{ 1 } & \small{ 0 } & 
\small{ 13.44 } & \small{ 17.48 } & \small{ 16.74 } & \small{ 13.64 } & 
\small{ 0.78 } & \small{ 0.98 } & \small{ 0.88 } & \small{ 0.71 } & 
\small{ 7.66 } & \small{ 8.75 } & \small{ 8.35 } & \small{ 6.52 }\\

\small{Forest} & \small{ 2 } & \small{ 5 } & \small{ 3 } & \small{ 1 } & 
\small{ 14.67 } & \small{ 13.67 } & \small{ 15.29 } & \small{ 15.96 } & 
\small{ 1.21 } & \small{ 1.67 } & \small{ 1.72 } & \small{ 1.11 } & 
\small{ 16.25 } & \small{ 16.62 } & \small{ 16.55 } & \small{ 16.78}\\
\hline
\small{RT-Snow-Grass} & \small{ 3 } & \small{ 4 } & \small{ 4 } & \small{ 2 } & 
\small{ 13.24 } & \small{ 17.66 } & \small{ 16.18 } & \small{ 13.72 } & 
\small{ 4.61 } & \small{ 5.90 } & \small{ 5.88 } & \small{ 4.11 } & 
\small{ 9.12 } & \small{ 9.52 } & \small{ 9.32 } & \small{ 8.98 }\\

\small{RT-Snow-Mud} & \small{ 5 } & \small{ 7 } & \small{ 4 } & \small{ 2 } & 
\small{ 13.28 } & \small{ 17.84 } & \small{ 16.10 } & \small{ 12.03 } & 
\small{ 3.97 } & \small{ 5.64 } & \small{ 3.21 } & \small{ 3.01 } & 
\small{ 13.14 } & \small{ 17.15 } & \small{ 14.12 } & \small{ 12.35 }\\

\small{RT-Hill} & \small{ 3 } & \small{ 6 } & \small{ 3 } & \small{ 2 } & 
\small{ 13.07 } & \small{ 15.85 } & \small{ 16.42 } & \small{ 13.91 } & 
\small{ 14.98 } & \small{ 15.23 } & \small{ 14.91 } & \small{ 13.07 } & 
\small{ 13.75 } & \small{ 17.61 } & \small{ 14.85 } & \small{ 12.32}\\

\small{RT-Snow-Forest}  & \small{ 0 } & \small{ 0 } & \small{ 0 } & \small{ 0 } & 
\small{ 13.55 } & \small{ 16.63 } & \small{ 14.39 } & \small{ 14.34 } & 
\small{ 7.18 } & \small{ 9.93 } & \small{ 8.12 } & \small{ 6.62 } & 
\small{ 6.72 } & \small{ 6.88 } & \small{ 6.92 } & \small{ 6.47 } \\

\hline

\small{RT-P-Snow-Grass} & \small{ 4 } & \small{ 7 } & \small{ 5 } & \small{ 2 } & 
\small{ 13.96 } & \small{ 16.39 } & \small{ 15.32 } & \small{ 13.84 } & 
\small{ 9.12 } & \small{ 9.96 } & \small{ 9.57 } & \small{ 8.54 } & 
\small{ 16.11 } & \small{ 16.66 } & \small{ 16.43 } & \small{ 16.09}\\

\small{RT-P-Hill} & \small{ 5 } & \small{ 8 } & \small{ 6 } & \small{ 4 } & 
\small{ 14.59 } & \small{ 15.87 } & \small{ 14.38 } & \small{ 15.77 } & 
\small{ 11.18 } & \small{ 15.93 } & \small{ 14.12 } & \small{ 10.62 } & 
\small{13.12 } & \small{ 17.56 } & \small{ 14.86 } & \small{ 12.51}\\

\hline
\end{tabular}
\end{table*}

\subsubsection{Experimental Analysis on Complex Terrain Tracks}
In the second experimental scenario, we evaluate our approach when robots navigate over complex off-road unstructured environments. To replicate the challenges of these real-world complex terrains we built an off-road terrain circuit track. The testing track has different combination of terrains of varying slopes of $0^{\circ}$-$30^{\circ}$ throughout the track.
We then test our robot in this testing track by taking two terrains at once to evaluate how different approaches perform when there is a transition between terrains.
We then have multiple complex terrains made using a combination of grass, gravels, large rocks, mud and sand terrains. 
An aerial view of this testing track along with the multiple complex terrains is shown in \ref{fig:figure_terrain_1_2}. 
The dimension of this testing track is 21 m x 10 m and each segment of terrain making up the track is of length 5 meters long. 
Therefore, each segment of complex terrain track (i.e., two terrain tracks combined) is either of 10 meters or 8 meters depending on the section of track the robot is navigating over. 
The same learned model used in the previous experimental scenario is deployed in this experiment with no additional training. 
In this experiments we compute the evaluation metrics based on five trial runs on each terrain track.

The quantitative results obtained by our approach and its comparison with other state of the art methods are provided in 
Table \ref{tab:RTT1_2}.

\subsection{Self-Reflection for Ground Navigation}
In this set of experiments, we aim to evaluate all the approaches in challenging scenarios where the robot would need the essential ability of self-reflection. 
Specifically, we evaluate the performance in scenarios when robots experience a change in their own functionality. 
Therefore, we introduce in robots three categories of setbacks.W
Each of these setbacks are used to increase the difficulty of robot navigation in a terrain and would increasingly demand the ability of self-reflection. These three setbacks are each tested in specific terrains each listed below:
\begin{itemize}

\item We \emph{over-inflate} all the robot tyres and with this setback robot generally has a higher center of mass and reduced traction with the terrain. 
This setback also causes increased jerkiness in the robot due to less damping from the over-inflated tyres. 
With this setback we test the robots in three different terrains of i) mixed terrain I (MT-I), ii) mixed terrain II (MT-II) and iii) the forest terrain.
None of the three terrains have any significant slope. 

\item We further reduce robot tyre traction by completely \emph{deflating and duct taping the tyres}. 
With this setback the wheel traction reduces significantly and the robot has a lower center of mass. 
This setback in robot is evaluated in four different terrains of i) reduced traction snow-grass (RT-Snow-Grass), ii) reduced traction snow-mud (RT-Snow-Mud), iii) reduced traction hill (RT-Hill) and iv) reduced traction snow-forest (RT-Snow-Forest).
Each of these terrains but the snow-forest terrain have a varying slope of $0^\circ$- $45^\circ$ which further enhances the affect of this setback.  

\item Finally, we introduce in the robot the setback of \emph{reduced tyre pressure, reduced wheel traction and increased payload}. That is on top of the completely deflated and duct taped tyres, we further add a payload  of fifteen pounds.  
This setback is evaluated in two different terrain of i) reduced traction payload snow-grass (RT-P-Snow-Grass) and ii) reduced traction payload hill (RT-P-Hill). Both of these terrains have a varying slope of $0^\circ$- $45^\circ$.
\end{itemize}

These nine different testing terrains are illustrated in Figure \ref{fig:figure_terrain_2}. 
No additional training is performed for this set of experiments and we use the same model from Section \ref{ref:train_start}. Besides, all the other parameters for deployment are kept the same. In this set of experiments the evaluation metrics for each of the methods are calculated based on ten trials of robot runs on each of the terrains. 

The quantitative results obtained by our approach and its comparison with other approaches are presented in Table \ref{tab:RTT1_2}. 
For the first category of setback, we observe that the failure rates is less in the tracks of mixed terrains and the highest failure is observed in the forest terrain.  
Our approach achieves the least failure rate in all of these three different terrains followed by the MPPI and TRAL approach. The NLOPT approach achieves a higher failure rate in both MT-II and forest terrain and is the poor performer amongst the different approaches. 
In terms of the traversal time, we do observe that our approach obtains a lesser traversal time in the terrains of MT-I but also has a slowest traversal time in the forest terrain. This again shows the ability of our approach to adapt its navigational behaviors as the terrain becomes more and more cluttered. The MPPI approach performs better in the MT-II terrain and the surprisingly we see that the NLOPT approach has the fastest traversal time in the Forest terrain which otherwise performs the slowest in the remaining two terrains.  In terms of the inconsistency metric our approach with the ability to generate consistent navigational behaviors is able to obtain the least inconsistency metric overall followed by the  MPPI approach. Both TRAL and NLOPT approach have high inconsistency values in respective terrains with TRAL approach achieving a poor performance in the Forest terrain.  In terms of the jerkiness metric, we observe that all the approaches have an increased value of jerkiness indicating the cluttered nature of terrain in a forest environment. Our approach has a lower jerkiness measure in MT-I and MT-II terrain but also performs poorly in the Forest terrain.  The MPPI approach obtains the second best performance in both the MT-I and MT-II terrain but also obtains the best performance in the Forest terrain. Both the NLOPT and TRAL approach perform poorly and have a high jerkiness metric overall. 

In the second category of setbacks, we observe an increased failure rate as compared to the previous terrains. Especially in the terrains with slope, i.e., RT-Snow-Mud, RT-Hill  and RT-Snow-Grass, failure rate is high as compared to RT-Snow-Forest which has no slope.  Our approach has lesser failure rate throughout all the terrains in this category and the NLOPT approach performs the worst.  In terms of traversal time, we do see that both MPPI performs better than our approach in all but RT-Snow-Mud terrain. Our approach trails MPPI approach by a small window but does show terrain aware behaviors as the velocities differ with the terrain as opposed to the MPPI approach. The approach of TRAL has a poorer performance as compared to MPPI and our approach, and the NLOPT obtains the highest traversal time values indicating the slowest traversal in all the terrains. For the inconsistency metric, we observe that our approach obtains the least value of inconsistency metric followed by both MPPI and TRAL approach with MPPI performing better in RT-Snow-Grass and RT-Snow-Forest and TRAL performing well in the remaining terrains. The NLOPT approach obtains the highest inconsistency value indicating a poor performance compared to other approaches. When observing the jerkiness metrics obtained by the different approaches, we see a similar trend of our approach performing the best in all the different terrains follows by both MPPI and TRAL approach. The NLOPT approach has the lowest jerkiness value in all the terrains but RT-Snow-Forest signaling a poor performance overall. Additionally, we see a lesser value of jerkiness over RT-Snow-Forest terrain as compared to the Forest terrain. This indicates that when covered with snow, much of the cluttered nature of the forest terrain is subdued.

Finally, we evaluate and compare our approach in the third category of setbacks. Having reduced tyre traction and payload especially on terrains with slope is very challenging for mobile robots \cite{grand2004stability}. With these challenges we observe that the failure rates are very high. Specifically in the RT-P-Hill terrain we see that all the approaches fail in nearly half the runs. This shows that with increased setbacks, it is really hard for robots to achieve their expected behaviours. We do observe our approach to have a slightly better performance over the other approaches followed by the MPPI, TRAL and NLOPT approaches, in that order. In terms of traversal time, we see mixed results. We observe that our approach obtains least traversal time in RT-P-Snow-Grass terrain where as is slow in RT-P-Hill terrain. In the RT-P-Hill terrain the TRAL approach obtains the lowest traversal time. The MPPI approach obtains the second best performance throughout and the NLOPT approach having the slowest traversal time in both the terrains.
In terms of inconsistency we observe that our approach has a lower inconsistency on both the terrains followed by the MPPI approach and then the TRAL approach. Again, we see that the inconsistency values obtained by the NLOPT approach is higher than the other approaches. A similar trend is seen in  terms of the jerkiness metric as well. We see our approach has a better performance followed by the MPPI approach in all the terrains. Both the TRAL and NLOPT approaches have a higher value of jerkiness suggesting a poor performance. 

Across all the three different category of setbacks, it is observed that by generating inconsistent navigational behaviors and lowering inconsistency values while navigating over various terrains, our approach is able to reduce the number of failures by trading-off the traversal time in some scenarios. 

\begin{figure*}
  \subfigure[{\footnotesize Performance on Various Frame Sequences}]{
    \label{fig:sequence} 
    \begin{minipage}[b]{0.2\textwidth}
      \centering
        \includegraphics[width=1\textwidth]{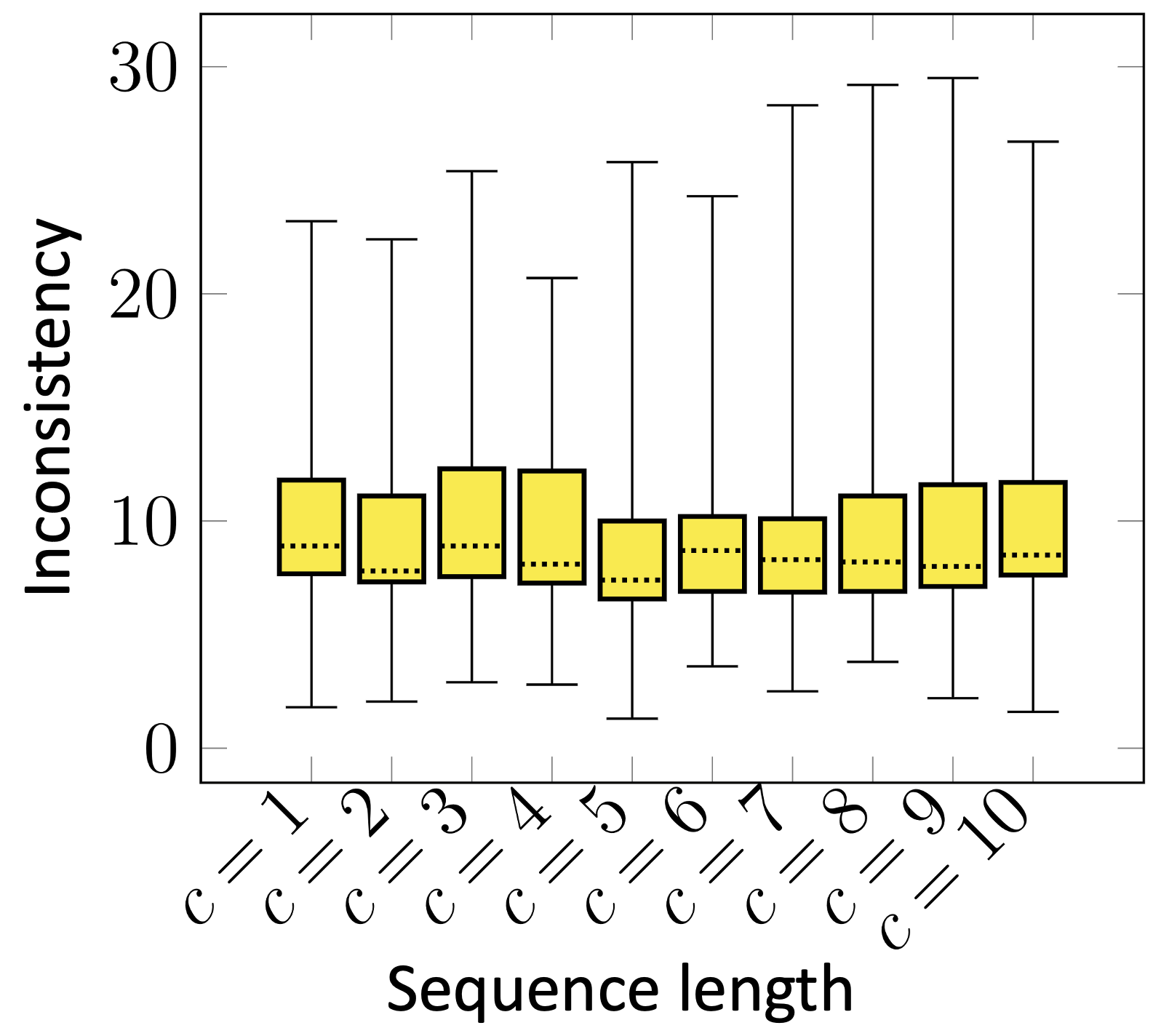}
        \centering
    \end{minipage}}
   \subfigure[{\scriptsize Hyper-parameter Analysis}]{
    \label{fig:hyper}
    \begin{minipage}[b]{0.22\textwidth}
      \centering
        \includegraphics[width=0.9\textwidth]{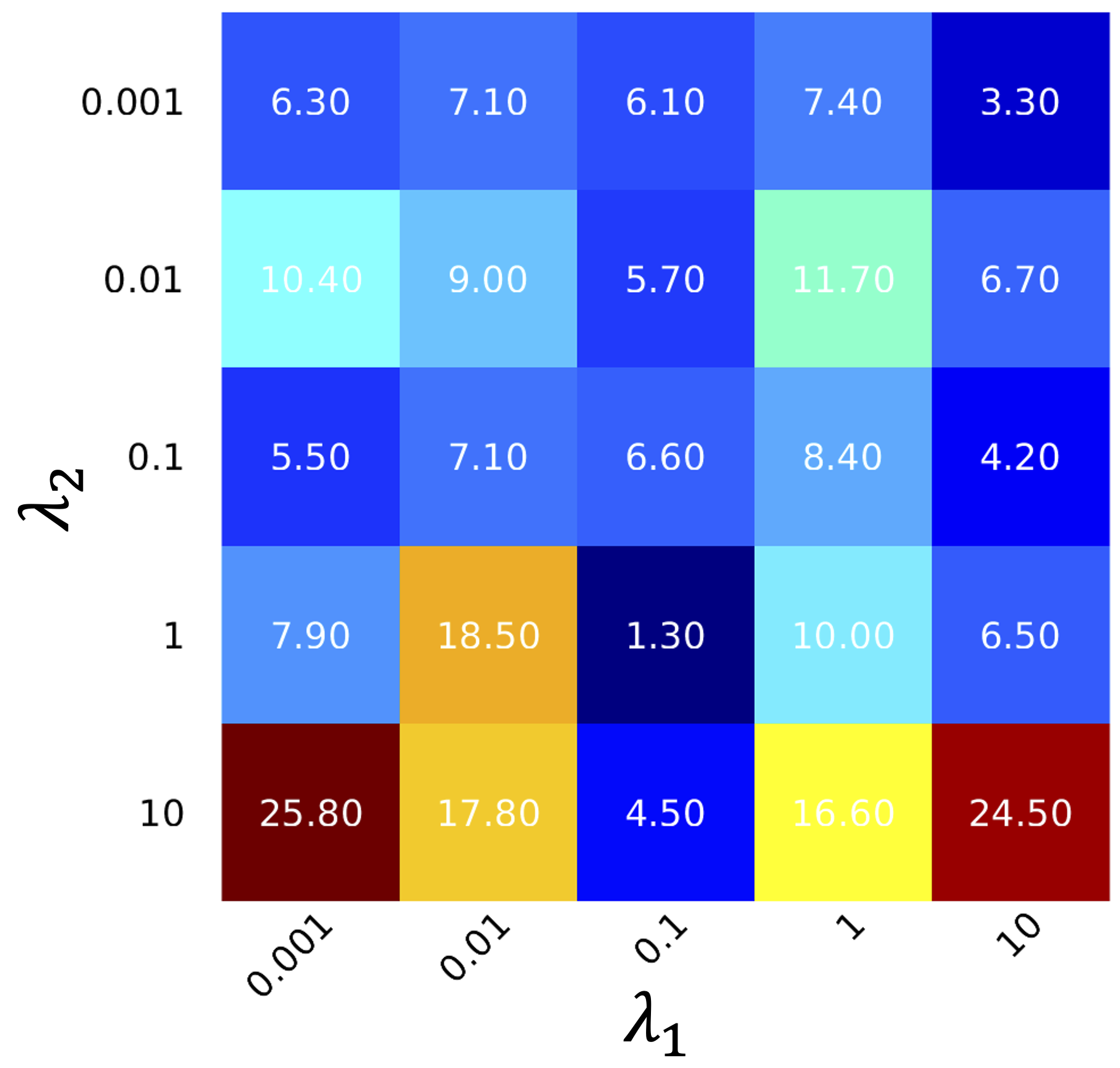}
        \centering
    \end{minipage}}
       \subfigure[{\scriptsize Historical Time Steps}]{
    \label{fig:prev}
    \begin{minipage}[b]{0.25\textwidth}
      \centering
        \includegraphics[width=0.85\textwidth]{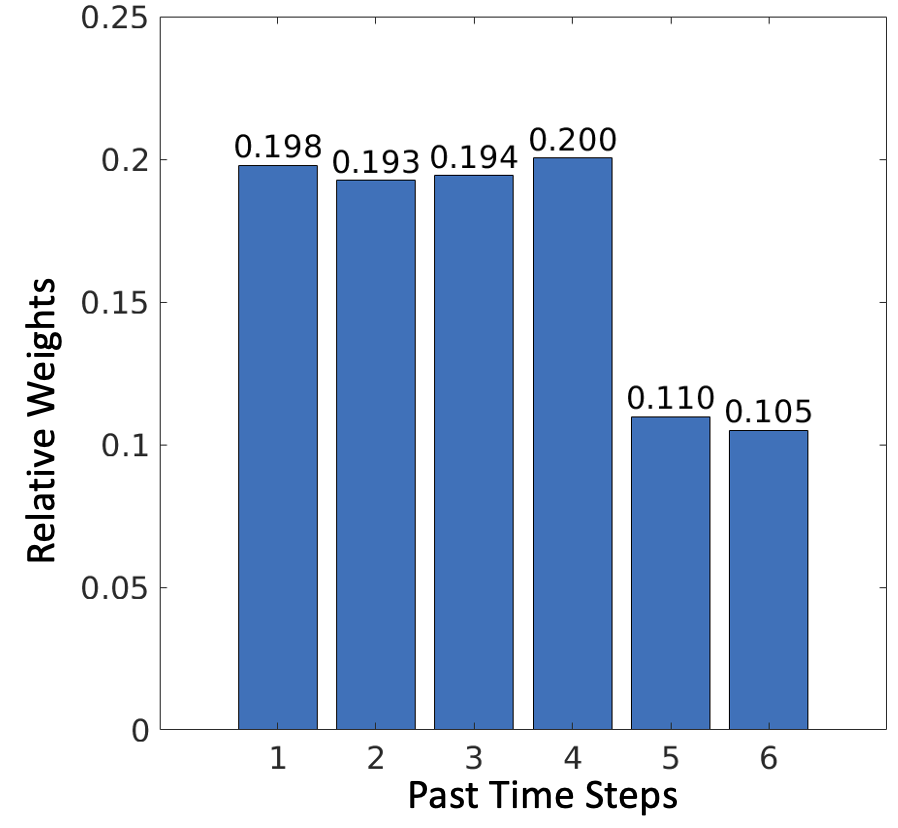}
        \centering
    \end{minipage}}
       \subfigure[{\scriptsize Discriminative Feature Modalities}]{
    \label{fig:feature_analysis}
    \begin{minipage}[b]{0.2\textwidth}
      \centering
        \includegraphics[width=1.35\textwidth]{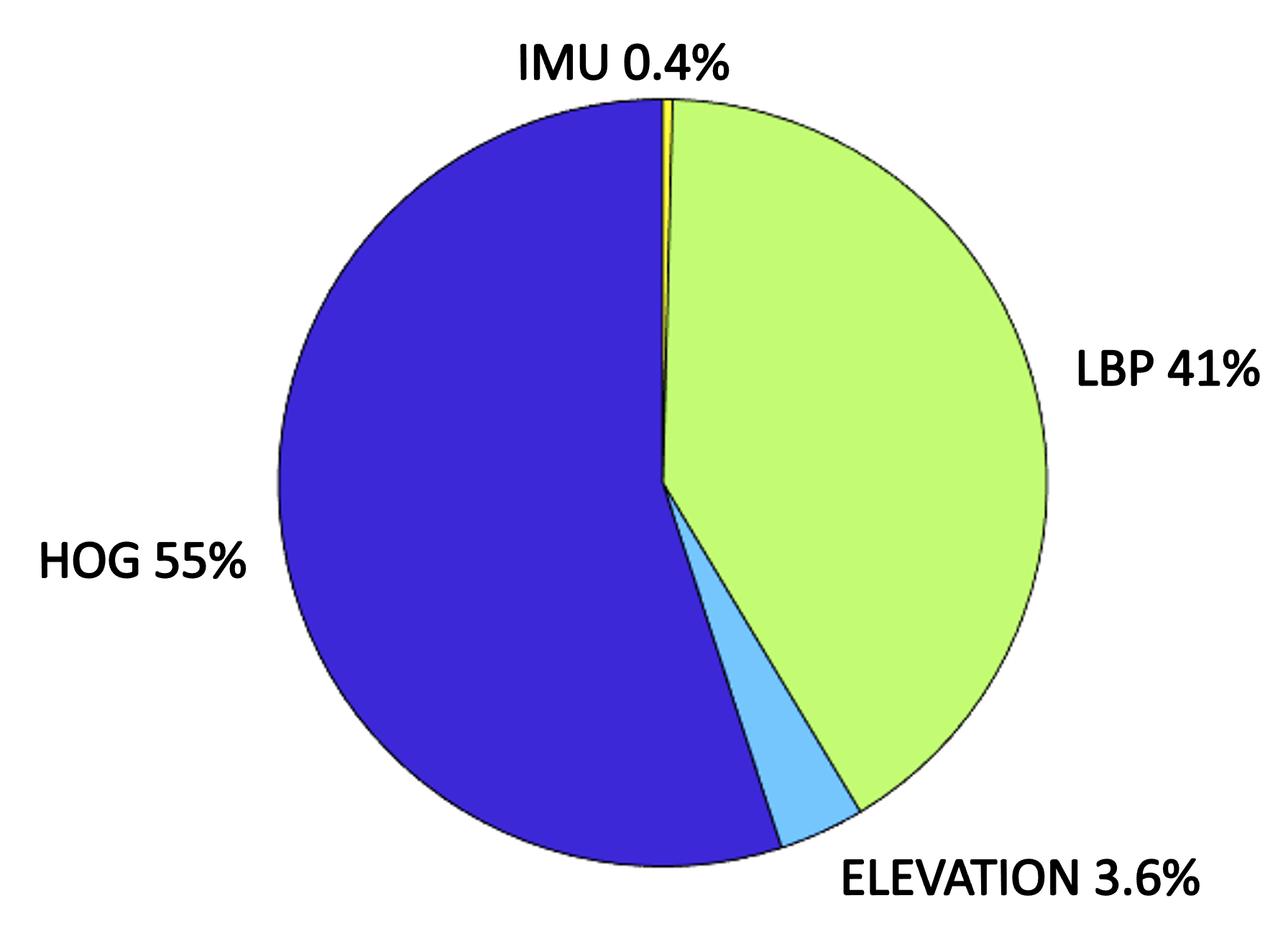}
        \centering
    \end{minipage}
    }
  \caption{Analysis of various parameters used in our experimental runs.
  }  \label{fig:parameter_analysis} 
\end{figure*}

\subsection{Discussion}
We further discuss on the additional experimental results to investigate the characteristics of our approach. These investigations derive the basis for the various parameters used in our experimental runs. This includes performance on various frame sequences, hyperparameter analysis, relative weights of each time step and the discriminative feature modalities.

\subsubsection{Performance on Various Frame Sequences:}
Our approach uses a sequence of historical frames '$c$', to generate terrain aware behavior and self-reflective consistent behaviors. 
Figure \ref{fig:sequence} uses a boxplot to show the distribution of inconsistency value for each value of historical frame sequence $c=1,\dots,10$; under various values of hyperparameters $\lambda_{1}\in [0.001,10]$ and $\lambda_{2}\in [0.001,10]$. The boxplot displays a distribution of results based on a five number summary (i.e., minimum, first quartlie, medium, thirds quartile, and maximum). 
It is observed that our approach performs well in specific values of $c=\{1,5,10\}$, and we observe the inconsistency metric being worst at $c=\{6,8\}$. The best inconsistency value of 1.3 is observed when $c=5$, that is the barplot with lowest minimum. 
This value of historical frame sequences in used in all of our experiments.

\subsubsection{Hyper-parameter Analysis:}
Our approach utilizes two hyper-parameters $\lambda_{1}$ and $\lambda_{2}$ that balances the amount of loss from our objective function and the regularization terms. Figure \ref{fig:hyper} depicts how the inconsistency metric changes with each of these hyperparameter values for five historical frame sequences, i.e., $c=5$. 
In general it is seen that the inconsistency metric changes significantly with variations in $\lambda_{2}$ as compared to $\lambda_{1}$, signifying the importance of modeling historical time steps. 
Altogether, it is observed that $\lambda_{1}\in (0.01,1)$ an $\lambda_{2} \in (0.01,10)$ result in a good performance. The best results are observed with $\lambda_{1}=0.1$ and $\lambda_{2}=1$. These are the fixed hyperparameters were used during execution for all the experiments.

\subsubsection{Historical Time Steps:} 
Our approach models features and behavior difference from a sequences of historical time steps to perform self-reflection. 
We therefor analyze the importance of each time steps from the sequence of frames in generating robot's navigational behaviors. 
Figure \ref{fig:prev} presents the importance of each time steps in the historical sequence of time steps. 
It is observed that the terrain features and behavior differences from most recent time steps have an higher importance for generating navigational behaviors. We observed that the present and past three time steps are equally important with the third past time step  having a weight of $20\%$.
The fourth and fifth past time step have a smaller importance of nearly $10\%$.

\subsubsection{Discriminative Feature Modalities:} 
Our approach has the ability to automatically estimate the importance of the various feature modalities. The results from evaluating the modality importance is demonstrated in Figure \ref{fig:feature_analysis}. It is observed that the two feature modalities, i.e., HOG and LBP features are most discriminative in performing self-reflective terrain adaptation. These two features combined together account for relative importance of $96\%$.  The IMU features and elevation features are comparatively not much important.

\section{Conclusion}\label{sec:CONC}
In this paper, we introduce a novel
method of self-reflective terrain-aware adaptation for autonomous ground robots 
to generate consistent behaviors to navigate over unstructured off-road environments.
Formulated in a unified regularized constrained optimization framework,
our method is also able to identify discriminative terrain features 
and fuse them to perform effective terrain adaptation.
Additionally, we design a new algorithm that addresses the formulated optimization  problem, which holds a theoretical guarantee to converge to the global optimal solution.
To evaluate our approach, we conduct extensive experiments 
using physical ground robots with varying functionality changes over diverse unstructured off-road terrains.
Experimental results have shown that our self-reflective terrain-aware adaptation method
 outperforms previous and baseline techniques and enables ground robots to generate consistent behaviors
when navigating in off-road environments.


\bibliographystyle{SageH}
\bibliography{references}

\end{document}